\newtheorem{Theorem}{Theorem}[section]
\newtheorem{Definition}[Theorem]{Definition}
\newtheorem{Assumption}[Theorem]{Assumption}
\newtheorem{Lemma}[Theorem]{Lemma}
\newcommand{\mc}{\mathcal}
\newcommand{\mbb}{\mathbb}
\newcommand{\argmin}{\operatornamewithlimits{argmin}}
\newcommand{\vectorize}{\textbf{vec}}
\title{Searching for Minimal Optimal Neural Networks}
\date{\today}
\author{
Lam Si Tung Ho \\
Department of Mathematics and Statistics \\
Dalhousie University, Halifax, Nova Scotia, Canada
\and
Vu Dinh \\
Department of Mathematical Sciences, University of Delaware\\
Delaware, USA
}
\begin{document}
\maketitle

\begin{abstract}
Large neural network models have high predictive power but may suffer from overfitting if the training set is not large enough.
Therefore, it is desirable to select an appropriate size for neural networks. 
The destructive approach, which starts with a large architecture and then reduces the size using a Lasso-type penalty, has been used extensively for this task.
Despite its popularity, there is no theoretical guarantee for this technique. 
Based on the notion of minimal neural networks, we posit a rigorous mathematical framework for studying the asymptotic theory of the destructive technique.
We prove that Adaptive group Lasso is consistent and can reconstruct the correct number of hidden nodes of one-hidden-layer feedforward networks with high probability.
To the best of our knowledge, this is the first theoretical result establishing for the destructive technique.
\end{abstract}

 \clearpage
\section{Introduction}

Artificial neural networks are highly expressive models that achieve excellent performance on many tasks.
However, the performance of a neural network model depends heavily on its structure. 
In particular, training with oversized neural networks on small or moderate datasets can lead to overfitting.
Moreover, training large neural networks requires a high memory and computation cost.
Therefore, choosing the right size for neural networks is an important problem and has been studied intensively.

Two common approaches for this task are the constructive method \cite{bello1992enhanced} and the destructive technique \cite{lecun1989optimal}.
The constructive method starts with a small neural network and gradually incorporates additional components until finding the best architecture.
The destructive technique, on the other hand, starts with a large neural network and remove unimportant components. 
One drawback of the constructive method is that we have to train a new model each time we add a new component.
On the contrary, we can utilize a Lasso-type penalty for the destructive technique to avoid this pitfall.
Although this method has been used extensively, to the best of our knowledge, no theoretical result has been established, even for one-hidden-layer feedforward networks.
The main challenge is that neural network models are non-linear and unidentifiable.

Choosing the architecture of neural networks can be considered as a model selection procedure.
To study asymptotic properties of such procedures, we posit a rigorous mathematical framework using the notion of \emph{minimal neural networks}.
For simplicity, we focus on one-hidden-layer feedforward networks with hyperbolic tangent activation function, which will be called ``networks" from now on.
In particular, a function $f: \mbb{R}^d \to \mbb{R}$ is a network if
\[
f(x) = v^\top \cdot \tanh(u \cdot x + b_1) + b_2
\]
where  $u$ is a $H \times d$ matrix; $b_1, v$ are $H$-dimensional vectors; and $b_2$ is a real number.
Here, $v^\top$ is the transpose of $v$; $H$ is the number of nodes in the hidden layer, $u, v$ are the weights; and $b_1, b_2$ are the biases.
A network is called \emph{minimal} if it does not have the same input-output map with a network with fewer hidden nodes.
The best structure for the training network is the structure of an optimal network that is also minimal.
It is obvious that there exists such a network if there exists an optimal network.
Recall that a network is \emph{optimal} if it minimizes the expected risk.
Therefore, we need to search for a minimal network among optimal networks.
We call it \emph{a minimal optimal network}.

A popular penalty for automatically reducing the number of hidden nodes of neural networks is the Group Lasso \cite{murray2015auto, alvarez2016learning, scardapane2017group, huang2018condensenet, murray2019autosizing}.
The penalty groups the weights and the bias parameters of each hidden node together and shrinks them to zero simultaneously.
Recent empirical studies in \cite{ho2020consistent,dinh2020consistent} suggest that the group Lasso penalty may not be as efficient as the Adaptive group Lasso for selecting between neural network models.
In this paper, we propose an Adaptive group Lasso method for the destructive technique and prove that the proposed method is guaranteed to recover the architecture of minimal optimal networks with high probability.
We use a simple simulation to illustrate that Adaptive group Lasso may be more advantageous than group Lasso in selecting the number of hidden nodes of networks.

\paragraph{Related work:}
Rynkiewicz \cite{rynkiewicz2006consistent} proposed an information criterion that can consistently select the number of hidden nodes of a network.
Nevertheless, information criterion has little application in practice because it requires to be computed for all possible models.
The performance of the destructive technique has been investigated extensively using both synthetic and real data \cite{lecun1989optimal,murray2015auto, alvarez2016learning, scardapane2017group, huang2018condensenet}.
However, little work has been done to investigate theoretically the performance of these methods for neural network models although asymptotic properties of Lasso-type regularization methods have been studied extensively for linear model \cite{zou2006adaptive, zhao2006model, wang2008note, meinshausen2009lasso, liu2009estimation}.
Notably, there are some recent theoretical works on asymptotic properties of Lasso-type regularization methods for feature selection under neural networks models \cite{dinh2020consistent,feng2017sparse, farrell2018deep, fallahgoul2019towards, shen2019asymptotic}.

\section{Mathematical framework}

Let $\alpha = (u, v, b_1, b_2)$, we denote a network with weights $u, v$ and biases $b_1, b_2$ by $f_\alpha$, and the $u, v, b_1, b_2$ components of $\alpha$ by $u_\alpha, v_\alpha, b_{1_\alpha}, b_{2_\alpha}$ respectively.
Recall that a network is \emph{minimal} if it does not have the same input-output map with a network with fewer hidden nodes.
Sussmann \cite{sussmann1992uniqueness} provided the following necessary and sufficient conditions for a network $f_\alpha$ to be minimal:

\begin{Lemma}
A network $f_\alpha$ is minimal if and only if
\begin{enumerate}
\item[(i)] $u_\alpha^{[:,i]} \ne 0$ for all $i$
\item[(ii)] $v_\alpha^{[i]} \ne 0$ for all $i$
\item[(iii)] $(u_\alpha^{[:,i]}, b_{1_\alpha}^{[i]}) \ne \pm (u_\alpha^{[:,j]}, b_{1_\alpha}^{[j]})$ for all $i \ne j$,
\end{enumerate}
where $u_\alpha^{[:,i]}$ is the $i$-th column of the matrix $u_\alpha$ and $v_\alpha^{[i]}$ is the $i$-th component of the vector $v_\alpha$.
\label{lem:minimal}
\end{Lemma}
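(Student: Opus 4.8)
The plan is to prove the two implications of the equivalence separately. The ``only if'' direction (necessity of (i)--(iii)) is elementary; the ``if'' direction (sufficiency) is where essentially all the work lies.

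For necessity I would argue the contrapositive: if one of the three conditions fails, I exhibit a network with strictly fewer hidden nodes having the same input--output map as $f_\alpha$. If (i) fails, say $u_\alpha^{[:,i]}=0$, then the $i$-th hidden unit outputs the constant $\tanh(b_{1_\alpha}^{[i]})$, which can be folded into the output bias $b_{2_\alpha}$, so node $i$ may be deleted. If (ii) fails, say $v_\alpha^{[i]}=0$, node $i$ contributes nothing and may be deleted. If (iii) fails, say $(u_\alpha^{[:,i]},b_{1_\alpha}^{[i]}) = \varepsilon\,(u_\alpha^{[:,j]},b_{1_\alpha}^{[j]})$ with $i\ne j$ and $\varepsilon\in\{+1,-1\}$, then oddness of $\tanh$ gives $\tanh(u_\alpha^{[:,j]}\cdot x + b_{1_\alpha}^{[j]}) = \varepsilon\,\tanh(u_\alpha^{[:,i]}\cdot x + b_{1_\alpha}^{[i]})$, so nodes $i$ and $j$ merge into a single node with outgoing weight $v_\alpha^{[i]}+\varepsilon\,v_\alpha^{[j]}$. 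In each case $f_\alpha$ is not minimal.

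For sufficiency, assume (i)--(iii) hold and, for contradiction, that some network with fewer than $H$ hidden nodes represents $f_\alpha$; among all networks computing this function pick $f_\beta$ with the least number $H_\beta$ of hidden nodes. Then $f_\beta$ itself satisfies (i)--(iii), for otherwise the reductions above would lower $H_\beta$. Writing $f_\alpha\equiv f_\beta$, moving everything to one side, and using oddness of $\tanh$ to merge any hidden unit of $\alpha$ whose $(u,b_1)$-parameters agree up to sign with those of a hidden unit of $\beta$ (discarding any merged unit whose combined outgoing weight is zero), I obtain an identity
\[
\sum_{k} a_k\,\tanh(\omega_k\cdot x + \gamma_k) \;+\; a_0 \;\equiv\; 0 \qquad (x\in\mathbb{R}^d)
\]
in which every $a_k\ne 0$, every $\omega_k\ne 0$, and the pairs $(\omega_k,\gamma_k)$ are pairwise distinct up to sign. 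The crux is then a linear--independence statement: when the $(\omega_k,\gamma_k)$ are nonzero and pairwise distinct modulo sign, the functions $x\mapsto\tanh(\omega_k\cdot x+\gamma_k)$ together with the constant function are linearly independent. Granting this, every $a_k=0$, so no hidden unit of $\alpha$ was left unmatched; hence $H_\beta=H$, contradicting $H_\beta<H$.

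Establishing that linear--independence statement is the step I expect to be the main obstacle. I would first reduce to one variable: restricting $x$ to a generic affine line $x=t\xi+\eta$, $t\in\mathbb{R}$, turns each $\tanh(\omega_k\cdot x+\gamma_k)$ into $\tanh\!\big((\omega_k\cdot\xi)\,t+(\omega_k\cdot\eta+\gamma_k)\big)$, and a short genericity argument shows that for $(\xi,\eta)$ outside a measure-zero set the new slopes are nonzero and the new pairs stay pairwise distinct modulo sign, so nothing is lost. For the one-dimensional claim I would complexify: $t\mapsto\tanh(\mu t+\nu)$ extends to a meromorphic function on $\mathbb{C}$ with simple poles exactly where $\mu t+\nu\in i\pi/2+i\pi\mathbb{Z}$, with residue $1/\mu$, and a vanishing real combination extends to a meromorphic function vanishing on all of $\mathbb{C}$. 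Comparing polar sets and residues of the individual terms --- equivalently, expanding $\tanh$ or its derivative $\mathrm{sech}^2$ in exponentials as $\mathrm{Re}(t)\to+\infty$ and stripping off coefficients order by order --- then forces every coefficient to vanish. The delicate part is the bookkeeping when one slope is a rational multiple of another, so that poles (equivalently, exponential frequencies) of distinct terms coincide: one must verify that for each $\ell$ with $a_\ell\ne 0$ there remains a pole of the $\ell$-th term not cancelled by the others, and this is exactly where the hypotheses $\omega_k\ne 0$ and ``distinct modulo sign'' (the sign ambiguity being forced by $\tanh$ being odd) enter.
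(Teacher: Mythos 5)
The paper does not actually prove this lemma: it is imported verbatim from Sussmann (1992), so there is no internal proof to compare against. Your proposal is, in effect, a reconstruction of Sussmann's argument, and its overall architecture is sound. The necessity direction (folding a zero-input-weight unit into $b_2$, deleting a zero-output-weight unit, merging sign-related units via oddness of $\tanh$) is complete as written. The sufficiency direction is correctly reduced to the key linear-independence statement for $\{1\}\cup\{\tanh(\omega_k\cdot x+\gamma_k)\}$ with $\omega_k\ne 0$ and $(\omega_k,\gamma_k)$ pairwise distinct modulo sign, and your generic-line reduction to $d=1$ checks out (the only cases needing care, $\omega_k=\pm\omega_\ell$, are handled by the bias distinctness, and the rest is a finite union of hyperplane conditions on $\xi$). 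One remark on the step you flag as delicate: you do not need to exhibit, for every $\ell$, a pole of the $\ell$-th term uncancelled by all others simultaneously; it suffices to peel terms off one at a time. Group the terms by the vertical line $\mathrm{Re}(t)=-\nu_k/\mu_k$ on which their poles lie; within a group the lowest pole of term $k$ sits at height $\pi/(2|\mu_k|)$, and two terms in the same group with equal $|\mu|$ would satisfy $(\mu_k,\nu_k)=\pm(\mu_\ell,\nu_\ell)$, which is excluded. Hence the unique term of maximal $|\mu|$ in each group has a pole shared by no other term (a pole of a smaller-$|\mu|$ term at that height would force $(2m+1)|\mu_k|=|\mu_\ell|<|\mu_k|$, impossible), so its coefficient vanishes; iterate. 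With that ordering supplied, your sketch closes and the lemma is proved; as it stands the proposal is a correct strategy with the hardest step still at sketch level, whereas the paper simply cites the result.
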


Moreover, Sussmann \cite{sussmann1992uniqueness} showed that two minimal networks that have same input-output map can be transformed from one to another by a series of sign-flip and node-interchange transformations.
A sign-flip transformation changes $(u_\alpha^{[:,i]}, b_{1_\alpha}^{[i]}) \to (- u_\alpha^{[:,i]}, - b_{1_\alpha}^{[i]})$ for a node $i$ while a node-interchange transformation switches the labels of two nodes: $(i, j) \to (j, i )$.
Therefore, a set of minimal networks that have the same input-output map is always finite.

We assume that the training data $(Y_k, X_k)_{k=1}^n$ are generated from the following model
\begin{equation}
Y_k = f_{\alpha^*}(X_k) + \epsilon_k
\label{eqn:model}
\end{equation}
where $\{\epsilon_i\}_{k = 1}^n$ are i.i.d. random variables that follow the normal distribution $\mc{N}(0, \sigma^2)$.
Without loss of generality, we assume that the data-generating network $f_{\alpha^*}$ is a minimal network.
Let $H^*$ be the unknown number of hidden nodes of $f_{\alpha^*}$. 
We consider the destructive techniques proposed by \cite{alvarez2016learning} for reconstructing $f_{\alpha^*}$.
Specifically, we fit a large network with $H$ hidden nodes and remove redundant nodes using Adaptive group Lasso.
Throughout this paper, we make the following assumption: 

\begin{Assumption}
We assume that $H^* \leq H$ and $\| \vectorize(\alpha^*) \|_\infty$ is bounded by a constant $W$.
Here, $\|\cdot\|_\infty$ is the $\ell_\infty$ norm and $\vectorize(\cdot)$ is the vectorization operator.
\end{Assumption}

Let $\mc{H}$ be the parameter space of all networks that have $H$ hidden nodes such that $\| \vectorize(\alpha) \|_\infty \leq W$ for all $\alpha \in \mc{H}$, and $\mc{H}^*$ be the parameter space of all minimal networks that have the same input-output map with $f_{\alpha^*}$.
From now on, ``a network in $\mc{H}$" means its parameter is in $\mc{H}$.
A hidden node $i$ is called a \emph{zero node} of $f_\alpha$ if $(u_{\alpha}^{[:,i]}, v_{\alpha}^{[i]}, b_{1_{\alpha}}^{[i]}) = 0$, and it is called a \emph{non-significant node} of $f_\alpha$ if $u^{[:,i]} = 0$ or $v^{[i]} = 0$.
Let $\mc{\overline H}^*$ be the parameter space of networks in $\mc{H}$ such that if we remove all zero nodes of a network in $\mc{\overline H}^*$, we obtain a network in $\mc{H}^*$.
Similarly, let $\mc{K}$ be the parameter space of networks in $\mc{H}$ such that if we remove all non-significant nodes of a network $\mc{K}$, we obtain a network in $\mc{H}^*$.
Finally, let $\mc{Q}$ be the parameter space of networks in $\mc{H}$ that have the same input-output map with $f_{\alpha^*}$.
It is obvious that $\mc{\overline H}^* \subset \mc{K} \subset \mc{Q} \subset \mc{H}$.
We can think of $\mc{\overline H}^*$ as an embedding of $\mc{H}^*$ into $\mc{H}$.
Since $\mc{H}^*$ is finite, $\mc{\overline H}^*$ is also finite.

For $\alpha,\beta \in \mc{H}$, we define a distance between $\alpha$ and $\beta$ by
\[
d(\alpha, \beta) = \| \vectorize(\alpha) - \vectorize(\beta) \|
\]
where $\|\cdot\|$ is the $\ell_2$ norm.
With this notation, we can rigorously define \emph{consistency} and \emph{model selection consistency}.

\begin{Definition}
An estimator $\hat \alpha$ is consistent if $d(\hat \alpha, \mc{\overline H}^*) \to 0$ in probability.
\end{Definition}

\begin{Definition}
An estimator $\hat \alpha$ is model selection consistent if 
\begin{itemize}
\item $\hat \alpha$ is consistent
\item for any $\delta > 0$, there exists $N_\delta$ such that if $n \ge N_\delta$, the probability that $f_{\hat \alpha}$ is a minimal network with $H^*$ non-zero nodes is at least $1 - \delta$.
\end{itemize}
\end{Definition}

Next, we introduce the Adaptive group Lasso method for estimating the unknown parameter $\alpha$.
It is a two-step process: 
\begin{itemize}
\item Step 1: obtain an initial estimator using Group Lasso
\[
\hat{\alpha}^{GL}_n = \argmin_{\alpha \in \mc{H}}{\frac{1}{n} \sum_{k = 1}^n{[Y_k - f_{\alpha}(X_k)]^2}} + \zeta_n \sum_{i=1}^H{\|w_i\|}
\]
where $w_i = (u^{[:,i]}, v^{[i]}, b_1^{[i]})$ is the vector of all parameters that associated with the $i$-th hidden node; $\zeta_n > 0$ is the regularizing parameter.

\item Step 2: compute the Adaptive group Lasso estimator
\[
\hat{\alpha}_n = \argmin_{\alpha \in \mc{H}}{\frac{1}{n} \sum_{k = 1}^n{[Y_k - f_{\alpha}(X_k)]^2}} + \lambda_n \sum_{i=1}^H {\frac{\|w_i\|}{\|w_{i_{\hat{\alpha}^{GL}_n}}\|^\gamma}}
\]
where $\gamma$ is a positive number; $\lambda_n > 0$ is the regularizing parameter; and $w_{i_{\hat{\alpha}^{GL}_n}}$ is the $w_i$-component of $\hat{\alpha}^{GL}_n$.
Here, we use the convention that $0/0 = 0$.
\end{itemize}

For convenience, we define
\[
L(\alpha) = \sum_{i=1}^H{\|w_i\|}, \quad M_n(\alpha) = \sum_{i=1}^H {\frac{\|w_i\|}{\|w_{i_{\hat{\alpha}^{GL}_n}}\|^\gamma}}.
\]

Next, we provide some basic lemmas that we need for our proofs.
To ease the notations, we will use $C_k$, $k = 1, 2, \ldots$, for denoting generic constants.

\begin{Lemma}
We have $| L(\alpha) - L(\beta) | \leq \sqrt{H} d(\alpha, \beta), \quad \forall \alpha \in \mc{H}.$
\label{lem:LipL}
\end{Lemma}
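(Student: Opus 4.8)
The plan is to chain together two elementary inequalities: the reverse triangle inequality applied group-by-group, and the Cauchy--Schwarz inequality to pass from an $\ell_1$-type sum over the $H$ groups to an $\ell_2$-type sum. First I would write, for each hidden node $i$, $w_i = w_i(\alpha)$ and $w_i(\beta)$ for the corresponding blocks $(u^{[:,i]}, v^{[i]}, b_1^{[i]})$ of $\alpha$ and $\beta$. The reverse triangle inequality for the $\ell_2$ norm gives $\bigl| \|w_i(\alpha)\| - \|w_i(\beta)\| \bigr| \le \|w_i(\alpha) - w_i(\beta)\|$, and summing over $i$ together with the ordinary triangle inequality yields
\[
| L(\alpha) - L(\beta) | \;\le\; \sum_{i=1}^H \bigl| \|w_i(\alpha)\| - \|w_i(\beta)\| \bigr| \;\le\; \sum_{i=1}^H \|w_i(\alpha) - w_i(\beta)\|.
\]

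Next I would apply Cauchy--Schwarz to the vector of ones and the vector $(\|w_i(\alpha) - w_i(\beta)\|)_{i=1}^H$, obtaining
\[
\sum_{i=1}^H \|w_i(\alpha) - w_i(\beta)\| \;\le\; \sqrt{H}\,\Bigl( \sum_{i=1}^H \|w_i(\alpha) - w_i(\beta)\|^2 \Bigr)^{1/2}.
\]
Finally I would observe that the $w_i$-blocks are disjoint sub-blocks of $\vectorize(\alpha)$, so $\sum_{i=1}^H \|w_i(\alpha) - w_i(\beta)\|^2$ is exactly the squared $\ell_2$ norm of the restriction of $\vectorize(\alpha) - \vectorize(\beta)$ to the $u,v,b_1$ coordinates, which is at most $\|\vectorize(\alpha) - \vectorize(\beta)\|^2 = d(\alpha,\beta)^2$ (the only coordinate omitted being $b_2$). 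Combining the three displays gives $|L(\alpha)-L(\beta)| \le \sqrt{H}\, d(\alpha,\beta)$.

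Honestly, there is no real obstacle here: the statement is a routine Lipschitz-continuity estimate, and the only point worth flagging is the mild asymmetry that $L$ does not see the bias $b_2$ while $d$ does, which is harmless since it only makes the bound on $\sum_i \|w_i(\alpha)-w_i(\beta)\|^2$ looser. One should also read the quantifier in the statement as ``for all $\alpha,\beta\in\mc{H}$''. The same two-step argument (reverse triangle inequality plus Cauchy--Schwarz over the $H$ groups) will presumably be reused later for the penalty $M_n$, up to a factor involving $\min_i \|w_{i_{\hat\alpha^{GL}_n}}\|^\gamma$.
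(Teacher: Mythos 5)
Your proof is correct and follows essentially the same route as the paper's: reverse triangle inequality blockwise, then Cauchy--Schwarz over the $H$ groups (the paper even writes the first step as an equality where your $\le$ is the more careful choice). Your remark that $L$ ignores $b_2$ while $d$ includes it, making the final bound only looser, is a detail the paper leaves implicit.
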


\begin{proof}
$| L(\alpha) - L(\beta) | = \sum_{i=1}^H{ | \|w_{i_\alpha}\| - \|w_{i_\beta}\| |} \leq \sum_{i=1}^H{ \|w_{i_\alpha} -w_{i_\beta}\|} \leq \sqrt{H} d(\alpha, \beta).$
\end{proof}

\begin{Lemma}
There exist $C_2 > 0$ such that for any $\beta \in \mc{Q}$, there exists $\beta^* \in \mc{\overline H}^*$ such that
$L(\beta) - L(\beta^*) \geq \min \left \{C_2, d(\beta, \mc{\overline H}^*) \right \}.$
\label{lem:dominateL}
\end{Lemma}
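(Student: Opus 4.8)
The plan is to reduce the lemma to a uniform lower bound on $L$ over $\mc{Q}$ anchored at its value on $\mc{\overline H}^{*}$. First, $L$ is constant on $\mc{\overline H}^{*}$: by Sussmann's theorem \cite{sussmann1992uniqueness} any two elements of $\mc{\overline H}^{*}$ differ only by sign-flips $w_{i}\mapsto -w_{i}$ on some nodes, by node interchanges, and by a permutation of which $H-H^{*}$ slots carry the zero nodes, and none of these changes the multiset $\{\|w_{i}\|\}_{i=1}^{H}$; write $L^{*}$ for this common value. Since $L(\beta^{*})=L^{*}$ for every $\beta^{*}\in\mc{\overline H}^{*}$, the lemma becomes the assertion that there is $C_{2}>0$ with $L(\beta)-L^{*}\ge\min\{C_{2},\,d(\beta,\mc{\overline H}^{*})\}$ for all $\beta\in\mc{Q}$.

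I would next establish the pointwise inequality $L(\beta)\ge L^{*}$ on $\mc{Q}$, with equality precisely on $\mc{\overline H}^{*}$, using Sussmann's classification of the networks equivalent to a minimal one \cite{sussmann1992uniqueness}. Every $\beta\in\mc{Q}$ is obtained from a minimal network equivalent to $f_{\alpha^{*}}$ by adding redundant structure of three kinds: nodes with $v^{[i]}=0$; nodes with $u^{[:,i]}=0$, which contribute only a constant absorbed into $b_{2}$; and a partition of the remaining (genuinely nonlinear) hidden nodes into clusters, the nodes in a cluster sharing one pair $(u^{[:,i]},b_{1}^{[i]})$ up to sign — one cluster per hidden node $k$ of $f_{\alpha^{*}}$ realizing its term, so that the $\pm$-signed $v$-sum over that cluster equals $\pm v^{*}_{k}$, plus possibly extra clusters of signed $v$-sum $0$. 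Writing $u^{*}_{k}=u_{\alpha^{*}}^{[:,k]}$, $b^{*}_{k}=b_{1_{\alpha^{*}}}^{[k]}$, $v^{*}_{k}=v_{\alpha^{*}}^{[k]}$ and $s_{k}=\|(u^{*}_{k},b^{*}_{k})\|$ (positive by Lemma \ref{lem:minimal}(i)), a node $i$ in the cluster $g$ realizing $k$ has $\|w_{i}\|=\sqrt{s_{k}^{2}+(v^{[i]})^{2}}$, and the two-dimensional triangle inequality gives $\sum_{i\in g}\sqrt{s_{k}^{2}+(v^{[i]})^{2}}\ \ge\ \sqrt{s_{k}^{2}+\bigl(\sum_{i\in g}|v^{[i]}|\bigr)^{2}}\ \ge\ \sqrt{s_{k}^{2}+(v^{*}_{k})^{2}}$, using $\sum_{i\in g}|v^{[i]}|\ge|v^{*}_{k}|$; the first inequality is moreover strict once $|g|\ge 2$. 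Every other node satisfies $\|w_{i}\|\ge 0$. Summing over clusters yields $L(\beta)\ge\sum_{k=1}^{H^{*}}\sqrt{s_{k}^{2}+(v^{*}_{k})^{2}}=L^{*}$, and in the equality case every realizing cluster is a singleton with $v^{[i]}=\pm v^{*}_{k}$, every $v=0$ or $u=0$ node has zero weight vector, and no extra cluster occurs — that is, $\beta\in\mc{\overline H}^{*}$.

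Finally I would turn the strict inequality on $\mc{Q}\setminus\mc{\overline H}^{*}$ into the quantitative bound by splitting at a small threshold $\epsilon>0$. The set $\{\beta\in\mc{Q}:d(\beta,\mc{\overline H}^{*})\ge\epsilon\}$ is compact ($\mc{Q}$ is closed and bounded) and disjoint from $\mc{\overline H}^{*}$, so $L-L^{*}$, which is Lipschitz by Lemma \ref{lem:LipL}, attains a positive minimum $\mu_{\epsilon}$ there. For $\beta\in\mc{Q}$ with $d(\beta,\mc{\overline H}^{*})<\epsilon$ I would use a rigidity argument: by Lemma \ref{lem:minimal}(iii) the pairs $(u^{*}_{k},b^{*}_{k})$ are pairwise non-proportional and are isolated directions of $f_{\alpha^{*}}$, so once $\epsilon$ is small enough the only nodes of $\beta$ that can lie in a realizing cluster are the $H^{*}$ nodes near the true ones; Sussmann uniqueness then pins these ``core'' nodes of $\beta$ to the corresponding nodes of some fixed $\beta^{*}\in\mc{\overline H}^{*}$ exactly, while the remaining $H-H^{*}$ nodes are ineffective (with $v^{[j]}=0$, or $u^{[:,j]}=0$, or lying in a zero-sum cluster) and have small weight vectors $w_{j}$. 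Hence $L(\beta)-L^{*}=\sum_{j\ \text{ineffective}}\|w_{j}\|$, while $d(\beta,\mc{\overline H}^{*})\le d(\beta,\beta^{*})=\bigl(\sum_{j}\|w_{j}\|^{2}+|b_{2_{\beta}}-b_{2_{\alpha^{*}}}|^{2}\bigr)^{1/2}$ with $|b_{2_{\beta}}-b_{2_{\alpha^{*}}}|=\bigl|\sum_{j:\,u^{[:,j]}=0}v^{[j]}\tanh(b_{1}^{[j]})\bigr|\le\sum_{j}\|w_{j}\|$, giving $d(\beta,\mc{\overline H}^{*})\le C_{3}\,(L(\beta)-L^{*})$ near $\mc{\overline H}^{*}$. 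Combining the two regimes and choosing $C_{2}$ accordingly yields the lemma. I expect the rigidity step to be the main obstacle: it requires upgrading Sussmann's exact-equivalence theorem into a statement controlling all of $\mc{Q}$ in a neighborhood of $\mc{\overline H}^{*}$ — in particular ruling out any cancellation between a decrease of the ``core'' part of $L$ and the gain from the ineffective nodes — while keeping track of the constant-node/$b_{2}$ coupling in the distance estimate so that the resulting constant is usable.
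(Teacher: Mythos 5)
Your proof is essentially correct but organized quite differently from the paper's. The paper argues pointwise: given $\beta\in\mc{Q}$ it builds a specific $\beta^*\in\mc{\overline H}^*$ by merging all nodes of $\beta$ that realize the same true node, and then splits into two cases --- if any merging actually occurs, the superadditivity of $x\mapsto\sqrt{s^2+x^2}$ forces $L(\beta)-L(\beta^*)\ge C_2$ for an explicit constant depending on $\min_k\|u^{[:,k]}_{\alpha^*}\|$; if no merging occurs, $\beta$ and $\beta^*$ differ only on the redundant nodes, which are zeroed out, giving $L(\beta)-L(\beta^*)\ge d(\beta,\beta^*)$ directly. You instead normalize by the observation (correct, and not made explicit in the paper) that $L$ is constant on $\mc{\overline H}^*$, prove the global inequality $L\ge L^*$ on $\mc{Q}$ with equality exactly on $\mc{\overline H}^*$ via the same two-dimensional triangle inequality, and then get the quantitative bound by compactness far from $\mc{\overline H}^*$ and a local rigidity argument near it. The trade-off: your compactness step is clean but yields a non-explicit $C_2$, whereas the paper's constant is explicit; conversely your equality characterization is a tidier way to see why $L$ genuinely detects membership in $\mc{\overline H}^*$.

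Two caveats. First, the ``rigidity step'' you flag as the main obstacle is in fact already resolved by the ingredients at hand, and is essentially the paper's Lemma \ref{lem:collapsable}: if $d(\beta,\mc{\overline H}^*)<\epsilon$ with $\epsilon<\min_k\|u^{[:,k]}_{\alpha^*}\|$, the $H-H^*$ small-norm nodes cannot belong to any realizing cluster, so by pigeonhole each of the $H^*$ realizing clusters is a singleton consisting of one of the near-true nodes; Sussmann's classification is \emph{exact}, so those singletons carry exactly the true $(u,b_1,v)$ up to sign --- there is no ``decrease of the core part of $L$'' to cancel against, and $L(\beta)-L^*$ is exactly the sum of the norms of the ineffective nodes. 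Second, your constant-node/$b_2$ bookkeeping gives $d(\beta,\mc{\overline H}^*)\le\sqrt{2}\,(L(\beta)-L^*)$ rather than coefficient $1$, so you prove $L(\beta)-L(\beta^*)\ge\min\{C_2,\,d(\beta,\mc{\overline H}^*)/\sqrt{2}\}$. This is marginally weaker than the statement as written (the paper's own ``$L(\beta)-L(\beta^*)=d(\beta,\beta^*)$'' glosses over the same $b_2$ coupling), but the extra constant is harmless everywhere the lemma is used, in particular in the proof of Theorem \ref{thm:gl}.
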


\begin{proof}
Consider $\alpha^* \in \mc{\overline H}^*$.
Let $\mc{I}$ be the set of non-zero nodes of $f_{\alpha^*}$.
For $i \in \mc{I}$, we define $A_i = \{ j : (u^{[:,j]}_{\beta}, b_{1_{\beta}}^{[j]}) = \pm (u^{[:,i]}_{\alpha^*}, b_{1_{\alpha^*}}^{[i]}) \}$.
Since $\beta \in  \mc{Q}$, $A_i \ne \emptyset$ for all $i \in \mc{I}$.
Set $\mc{J} = \{ \min A_i : i \in \mc{I}\}$. 
We define $\beta^* \in \mc{H}$ as follows:
\begin{itemize}
\item For $j \in \mc{J}$, 
\begin{equation}
\begin{aligned}
&(u_{\beta^*}^{[:,j]}, b_{1_{\beta^*}}^{[j]}) =  (u_{\alpha^*}^{[:,i]}, b_{1_{\alpha^*}}^{[i]})\\
&v_{\beta^*}^{[j]} = \sum_{k \in A_i}{\left (2 I_{\left \{(u_{\beta}^{[:,k]}, b_{1_{\beta}}^{[k]}) =  (u_{\alpha^*}^{[:,i]}, b_{1_{\alpha^*}}^{[i]}) \right \} } - 1 \right ) v_{\alpha^*}^{[k]}}
\end{aligned}
\label{eq:step1}
\end{equation}
where $i \in \mc{I}$ such that $j = \min A_i$, and $I_{\{ \cdot \}}$ is the indicator function.
\item For all $j \not\in \mc{J}$, 
\begin{equation}
w_{j_{\beta^*}} = 0.
\label{eq:step2}
\end{equation}
\end{itemize}
Here, the idea is to merge all identical nodes in $\beta$ corresponding to a non-zero node of $f_{\alpha^*}$ into one.
By Lemma \ref{lem:minimal}, we have $\beta^* \in \mc{\overline H}^*$.
From the construction of $\beta^*$, $\mc{J}$ is the set of all non-zero nodes of $f_{\beta^*}$ and $u_{\beta^*}^{[:,j]} = u_{\beta}^{[:,j]}$ for all $j \in \mc{J}$.

If $\min A_i \ne \max A_i$ for a node $i$, then we need to merge at least some identical nodes of $\beta$ to construct $\beta^*$.
It is worth noticing that for $|x|, |y_1|, |y_2|, \ldots, |y_K| \leq C$, we have
\[
\sum_{k=1}^K{\sqrt{x^2 + y_k^2}} - \sqrt{x^2 + \left (\sum_{k=1}^K{y_k} \right )^2} \geq \frac{(K-1)x^2}{(2\sqrt{2} + \sqrt{5})C}.
\]
Applying this inequality for $x^2 = \|u_{\alpha^*}^{[:,i]}\|^2 + \|b_{1_{\alpha^*}}^{[i]}\|^2$ and $y_k = v_{\alpha^*}^{[k]}$,
we conclude that there exists a constant $C_2 > 0$ such that
\begin{equation}
L(\beta) - L(\beta^*) \geq C_2.
\label{eq:lowerL}
\end{equation}
This is because $\mc{I}$ is a finite set and $\|u_{\alpha^*}^{[:,i]}\|^2 > 0$.
In other words, merging nodes increases the group Lasso penalty significantly.

On the other hand, if $\max A_i = \min A_i$ for all $i \in \mc{I}$.
By the construction of $\beta^*$, 
\begin{equation}
L(\beta) - L(\beta^*) = d(\beta, \beta^*) \geq d(\beta, \mc{\overline H}^*).
\label{eq:eqL}
\end{equation}
Combining \eqref{eq:lowerL} and \eqref{eq:eqL}, we have $L(\beta) - L(\beta^*) \geq \min \left \{C_2, d(\beta, \mc{\overline H}^*) \right \}.$
\end{proof}

\begin{Lemma}
There exists a constant $C_1 > 0$ such that for any $\beta \in \mc{Q}$, if 
the set $\mc{U} = \{ i: \|w_{i_{\beta}}\| < C_1\}$ has $H - H^*$  nodes, then the network $\beta'$, obtained by setting the weights and the biases of $\beta$ at nodes in $\mc{U}$ to zero, belongs to $\mc{\overline H}^*$.
\label{lem:collapsable}
\end{Lemma}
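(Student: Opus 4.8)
The plan is to choose $C_1$ small enough that every hidden node of $\beta$ sitting in one of the ``essential directions'' of $f_{\alpha^*}$ is automatically excluded from $\mc{U}$, and then run a counting argument, in the spirit of the proof of Lemma~\ref{lem:dominateL}, to pin down the structure of $\beta$. Concretely, I would fix a reference $\alpha^*\in\mc{\overline H}^*$ with set of non-zero nodes $\mc{I}$ (so $|\mc{I}|=H^*$), set $c_0:=\min_{i\in\mc{I}}\|u_{\alpha^*}^{[:,i]}\|$, which is strictly positive by condition (i) of Lemma~\ref{lem:minimal}, and take $C_1:=c_0$ (any smaller positive constant works equally well). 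Now let $\beta\in\mc{Q}$ satisfy $|\mc{U}|=H-H^*$. As in Lemma~\ref{lem:dominateL}, since $f_\beta\equiv f_{\alpha^*}$, for each $i\in\mc{I}$ the set $A_i=\{\,j:(u_\beta^{[:,j]},b_{1_\beta}^{[j]})=\pm(u_{\alpha^*}^{[:,i]},b_{1_{\alpha^*}}^{[i]})\,\}$ is non-empty, and the $A_i$ are pairwise disjoint because condition (iii) makes the directions $(u_{\alpha^*}^{[:,i]},b_{1_{\alpha^*}}^{[i]})$ pairwise distinct up to sign. For $j\in A_i$ one has $\|w_{j_\beta}\|\ge\|u_\beta^{[:,j]}\|=\|u_{\alpha^*}^{[:,i]}\|\ge c_0=C_1$, so $j\notin\mc{U}$; hence $\bigcup_{i\in\mc{I}}A_i\subseteq\{1,\dots,H\}\setminus\mc{U}$, a set of cardinality $H^*$. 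Being a disjoint union of $H^*=|\mc{I}|$ non-empty sets contained in a set of size $H^*$, $\bigcup_i A_i$ must equal $\{1,\dots,H\}\setminus\mc{U}$ and each $A_i$ must be a singleton $A_i=\{\sigma(i)\}$.

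The remaining steps are then: (a) matching the coefficient of $\tanh(u_{\alpha^*}^{[:,i]}\cdot x+b_{1_{\alpha^*}}^{[i]})$ in the identity $f_\beta\equiv f_{\alpha^*}$ and using $|A_i|=1$ to get $v_\beta^{[\sigma(i)]}=\pm v_{\alpha^*}^{[i]}\ne 0$, so that the nodes $\{\sigma(i)\}_{i\in\mc{I}}$ are a sign-flipped relabelling of the active part of $\alpha^*$ and, by Lemma~\ref{lem:minimal}, form a minimal network; and (b) showing that the nodes in $\mc{U}$ contribute identically zero to the input--output map, so that deleting them (keeping $b_{2_\beta}$) preserves the function. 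For (b) I would use that every $j\in\mc{U}$ has $(u_\beta^{[:,j]},b_{1_\beta}^{[j]})\ne\pm(u_{\alpha^*}^{[:,i]},b_{1_{\alpha^*}}^{[i]})$ for all $i$, so that linear independence of the functions $x\mapsto\tanh(a\cdot x+b)$ over pairs $(a,b)$ distinct up to sign, together with the constants, forces the signed $v$-sum within each such direction class, as well as the net contribution of the nodes with $u_\beta^{[:,j]}=0$, to vanish. Combining (a) and (b) gives $f_{\beta'}\equiv f_{\alpha^*}$ with non-zero-node set exactly $\{\sigma(i)\}_{i\in\mc{I}}$, hence $\beta'\in\mc{\overline H}^*$.

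I expect step (b) to be the delicate one: small norm by itself does not force a node with $u_\beta^{[:,j]}=0$ (a pure bias/constant node) to contribute zero, so the cancellation has to be extracted from $\beta\in\mc{Q}$ and tracked carefully together with the output bias $b_2$ and the exact-equivalence requirement built into $\mc{\overline H}^*$. The counting argument of the first paragraph, by contrast, is routine once $C_1\le c_0$ has been fixed so that essential-direction nodes cannot enter $\mc{U}$; the key structural input there is again Sussmann's characterization of minimality (Lemma~\ref{lem:minimal}) and the disjointness of the $A_i$.
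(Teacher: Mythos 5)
Your first paragraph is precisely the paper's proof: the paper also takes $C_1=\min_i\|u_{\alpha^*}^{[:,i]}\|>0$, notes that $A_i\cap\mc{U}=\emptyset$ for every $i$, and concludes from $|\mc{U}|=H-H^*$ and $A_i\neq\emptyset$ that each $A_i$ is a singleton, so that $\mc{U}$ is exactly the complement of $\bigcup_i A_i$. Where you part ways is the finish. The paper does not re-verify the input--output identity at all: it simply observes that once $|A_i|=1$, the merged network $\beta^*$ built in the proof of Lemma~\ref{lem:dominateL} via \eqref{eq:step1}--\eqref{eq:step2} satisfies $w_{i_{\beta^*}}=w_{i_\beta}$ off $\mc{U}$ and $w_{i_{\beta^*}}=0$ on $\mc{U}$, hence $\beta^*=\beta'$, and membership in $\mc{\overline H}^*$ is inherited from that earlier construction. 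Your steps (a) and (b) instead re-derive the structure of $\beta$ from linear independence of the $\tanh$ ridge functions; this is more work, but it makes explicit what the paper's two-line citation leaves implicit.

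The gap is your step (b), which you leave open --- and you have correctly located the one point that is genuinely problematic, for the paper as much as for you. A node $j\in\mc{U}$ with $u_\beta^{[:,j]}=0$ contributes the constant $v_\beta^{[j]}\tanh(b_{1_\beta}^{[j]})$, which need not vanish even though $\|w_{j_\beta}\|<C_1$; the identity $f_\beta\equiv f_{\alpha^*}$ only forces this constant to be compensated by $b_{2_\beta}$. Since $\beta'$ zeroes $w_{j_\beta}$ but keeps $b_{2_\beta}$, the input--output map changes: with $H^*=1$, $H=2$, node $2=(0,b,v)$ chosen so that $v\tanh(b)=c\neq0$ and $b_{2_\beta}=b_{2_{\alpha^*}}-c$, one gets $\beta\in\mc{Q}$, $|\mc{U}|=1$, yet $f_{\beta'}\not\equiv f_{\alpha^*}$. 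So the cancellation you hope to ``extract from $\beta\in\mc{Q}$'' is not actually available; closing step (b) requires either excluding pure-bias nodes or allowing $b_2$ to absorb their constant contribution when nodes are removed. The paper's proof silently inherits the same issue through Lemma~\ref{lem:dominateL}, whose construction likewise never touches $b_2$. In short: your counting argument matches the paper exactly, your step (a) is fine, and your step (b) cannot be completed as stated --- not because your route is wrong, but because it honestly confronts a case the paper's terser argument glosses over.
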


\begin{proof}
Recall that $\alpha^*$ is a network in $\mc{H}^*$.
Denote
\begin{align*}
A &= \{ \| u_{i_{\alpha^*}} \| :  i = 1, 2, \ldots, H^* \} \\
A_i &= \{ j : (u^{[:,j]}_{\beta}, b_{1_{\beta}}^{[j]}) = \pm (u^{[:,i]}_{\alpha^*}, b_{1_{\alpha^*}}^{[i]}) \}, \quad i = 1, 2, \ldots, H^*.
\end{align*}
Note that $|A| = H^*$.
Set $C_1 = \min A > 0$.
By the definition of $\mc{U}$, we have
\[
A_i \cap \mc{U} = \emptyset, \quad \forall i = 1, 2, \ldots, H^*.
\]
Since $| \mc{U} | = H - H^*$ and $A_i \ne \emptyset$, we have $| A_i | = 1$ for all $i = 1, 2, \ldots, H^*$.

Let $\beta^* \in \mc{\overline H}^*$ be the network constructed from $\beta$ as in \eqref{eq:step1} and \eqref{eq:step2}.
From the fact that $| A_i | = 1$ for all $i = 1, 2, \ldots, H^*$, we obtain
\[
w_{i_{\beta^*}} = 
\begin{cases}
w_{i_{\beta}} & i \not\in \mc{U} \\
0 & i \in \mc{U}.
\end{cases}
\]
Thus, $\beta' = \beta^* \in \mc{\overline H}^*$.
\end{proof}

\section{Asymptotic properties of Adaptive group Lasso}

Let $R_n(\alpha)$ and $R(\alpha)$ be the empirical risk and expected risk of $f_\alpha$ respectively.
That is, $R_n(\alpha) = \frac{1}{n} \sum_{i=1}^n{[Y_i - f_{\alpha}(X_i)]^2}$ and $R(\alpha) = E([Y - f_{\alpha}(X)]^2).$
Note that $R(\alpha) = \sigma^2 = \min_{\beta \in \mc{H}} R(\beta)$ if and only if $\alpha \in \mc{Q}$.

\subsection{Properties of risks}

First, we state some properties of $R_n(\alpha)$ and $R(\alpha)$.

\begin{Lemma}[Generalization bound]
For any $\delta>0$, there exist $C_3(\delta) > 0$ 
\[
| R_n( \alpha) - R(\alpha) | \le C_3 \frac{\log n}{\sqrt{n}}, \quad \forall \alpha \in \mc{H}
\]
with probability at least $1-\delta$.
\label{lem:generalization}
\end{Lemma}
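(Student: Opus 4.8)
The plan is to bound the supremum $\sup_{\alpha \in \mc{H}} |R_n(\alpha) - R(\alpha)|$ via a combination of a uniform concentration argument and a covering-number (metric entropy) estimate for the function class $\{f_\alpha : \alpha \in \mc{H}\}$. First I would observe that, since $\|\vectorize(\alpha)\|_\infty \le W$ on $\mc{H}$, the networks $f_\alpha$ are uniformly bounded: $|f_\alpha(x)| \le B$ for some constant $B = B(H,d,W)$ using $|\tanh| \le 1$. The residual $Y_k - f_\alpha(X_k) = \epsilon_k + (f_{\alpha^*}(X_k) - f_\alpha(X_k))$ has a sub-Gaussian tail coming from $\epsilon_k \sim \mc{N}(0,\sigma^2)$, so the squared residual $[Y_k - f_\alpha(X_k)]^2$ is sub-exponential with parameters controlled by $B$ and $\sigma$. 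A standard truncation of the Gaussian noise at level $O(\sigma\sqrt{\log n})$ turns this into a class of uniformly bounded random variables at the cost of a probability $O(1/\text{poly}(n))$ and an error term of order $\log n / n$, which is absorbed into the claimed rate.

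Next I would establish that the map $\alpha \mapsto f_\alpha$ is Lipschitz from $(\mc{H}, d)$ into $L^\infty$: because $\tanh$ is $1$-Lipschitz and all weights and inputs are bounded, $\|f_\alpha - f_\beta\|_\infty \le C\, d(\alpha,\beta)$ for a constant $C$ depending on $H$, $d$, $W$, and the bound on $\|X\|$. Consequently the squared-loss functional $\alpha \mapsto [Y_k - f_\alpha(X_k)]^2$ (after noise truncation) is also Lipschitz in $\alpha$ with a constant of order $\sigma\sqrt{\log n}$. Since $\mc{H}$ is a bounded subset of a Euclidean space of fixed dimension $p = H(d+2)$, its $\varepsilon$-covering number is $O((W/\varepsilon)^p)$, so $\log N(\varepsilon, \mc{H}, d) = O(p \log(1/\varepsilon))$.

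Then I would run the usual chaining/union-bound argument. Fix an $\varepsilon$-net of $\mc{H}$ of cardinality $N(\varepsilon)$ with $\varepsilon = 1/n$, say. For each net point $\alpha_j$, Bernstein's (or Hoeffding's, post-truncation) inequality gives $|R_n(\alpha_j) - R(\alpha_j)| \le C\sqrt{\frac{\log(N(\varepsilon)/\delta)}{n}} = C\sqrt{\frac{p\log n + \log(1/\delta)}{n}}$ with probability $1 - \delta/N(\varepsilon)$ per point; a union bound over the net makes this hold simultaneously with probability $1-\delta$. For a general $\alpha$, pick the nearest net point $\alpha_j$ and control $|R_n(\alpha) - R_n(\alpha_j)|$ and $|R(\alpha) - R(\alpha_j)|$ by the Lipschitz bound times $\varepsilon$, which is of order $\sigma\sqrt{\log n}\cdot(1/n) = o(\log n/\sqrt{n})$. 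Collecting the three pieces — net concentration $O(\sqrt{(p\log n + \log(1/\delta))/n})$, discretization $o(\log n/\sqrt n)$, and truncation $O(\log n/n)$ — yields a total bound of order $\log n/\sqrt n$ with $p$ and $\delta$ folded into the constant $C_3(\delta)$, as claimed.

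The main obstacle is handling the unbounded Gaussian noise cleanly: the raw summands $[Y_k - f_\alpha(X_k)]^2$ are only sub-exponential, not bounded, so one cannot apply Hoeffding directly, and the cross term $2\epsilon_k(f_{\alpha^*}(X_k) - f_\alpha(X_k))$ is the delicate one. The truncation argument (replace $\epsilon_k$ by $\epsilon_k \mathbf{1}\{|\epsilon_k| \le c\sigma\sqrt{\log n}\}$, bound the discarded mass by Gaussian tails, and note the truncated process differs from the original on an event of probability $O(n^{-c'})$) is what converts the problem into the bounded setting where covering numbers and Hoeffding apply; verifying that the resulting error is genuinely $O(\log n/\sqrt n)$ and not larger is the one place where the $\log n$ factor in the statement is actually consumed.
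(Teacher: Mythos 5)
Your proposal is correct in outline, but it takes a genuinely different route from the paper for the simple reason that the paper does not prove this lemma at all: its entire proof is a one-line citation to Lemma 3.3 of \cite{dinh2020consistent}. What you have written is essentially a self-contained reconstruction of the standard argument that such a citation hides: uniform boundedness of $f_\alpha$ via $|\tanh|\le 1$ and $\|\vectorize(\alpha)\|_\infty\le W$, Lipschitz dependence of $f_\alpha$ on $\alpha$, an $\varepsilon$-net of the compact parameter set $\mc{H}\subset\mbb{R}^{H(d+2)+1}$ with $\log N(\varepsilon)=O(p\log(1/\varepsilon))$, truncation of the Gaussian noise at level $O(\sigma\sqrt{\log n})$ to handle the sub-exponential cross term, concentration at each net point, and a union bound plus discretization. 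This buys transparency about where the $\log n$ factor comes from, at the cost of length; the paper buys brevity at the cost of leaving the reader to check that the hypotheses of the cited lemma are met. Two points deserve care in your write-up. First, after truncation the summands are bounded by $O(\log n)$, so plain Hoeffding over the net gives $(\log n)^{3/2}/\sqrt{n}$, which overshoots the claimed rate; you need Bernstein (with the $O(1)$ variance proxy carrying the leading term and the $O(\log n)$ range entering only at order $(\log n)^2/n$) to land exactly on $\log n/\sqrt{n}$, as you correctly gesture at by listing Bernstein first. Second, your Lipschitz constant for $\alpha\mapsto f_\alpha(X_k)$ depends on a bound for $\|X\|$, which the paper never assumes (indeed its simulations use Gaussian inputs); you would need either to add a boundedness or sub-Gaussian tail assumption on $X$ and truncate it alongside $\epsilon$, or to note that the paper itself implicitly inherits such a condition from the cited lemma. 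Neither point is a fatal gap, but both are exactly the places where the bookkeeping can silently fail.
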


\begin{proof}
This Lemma is a direct consequence of Lemma 3.3 in \cite{dinh2020consistent}.
\end{proof}

\begin{Lemma}
There exists $C_4, \nu > 0$ such that
\[
R(\alpha) - \sigma^2 \geq C_4 d(\alpha, \mc{Q})^\nu, \quad \forall \alpha \in \mc{H}.
\]
\label{lem:ineq}
\end{Lemma}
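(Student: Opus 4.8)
The plan is to reduce the estimate to the \L{}ojasiewicz inequality for subanalytic functions on the compact set $\mc{H}$. First I would rewrite the left-hand side. Since $Y = f_{\alpha^*}(X) + \epsilon$ with $\epsilon$ independent of $X$, $E\epsilon = 0$ and $E\epsilon^2 = \sigma^2$, expanding the square gives $h(\alpha) := R(\alpha) - \sigma^2 = E\big[(f_\alpha(X) - f_{\alpha^*}(X))^2\big] \ge 0$, and, as recorded just before Section~3.1, $h(\alpha) = 0$ if and only if $\alpha \in \mc{Q}$; equivalently $h^{-1}(0) \cap \mc{H} = \mc{Q}$. So it remains to bound the nonnegative function $h$ from below by a power of $d(\alpha, \mc{Q})$ on $\mc{H}$, i.e.\ to show that $h$ cannot be too flat near its zero set.

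The key technical point is that $h$ is \emph{real-analytic} on a neighborhood of $\mc{H}$. For each fixed $x$, $\alpha \mapsto f_\alpha(x)$ is real-analytic, being built from $\alpha$ by affine maps, the real-analytic function $\tanh$, and products; moreover it extends holomorphically to a fixed complex neighborhood of any bounded set of parameters, uniformly over the support of the input distribution, because the arguments of $\tanh$ then stay in a bounded region bounded away from its poles --- this is where a boundedness (or light-tail) condition on $X$, of the kind already in force for Lemma~\ref{lem:generalization}, enters. Integrating the square against the input distribution and invoking standard results on parameter integrals of holomorphic functions shows that $h$ is real-analytic. Consequently $h$ is subanalytic, $\mc{Q} = h^{-1}(0) \cap \mc{H}$ is a subanalytic subset of the semialgebraic set $\mc{H}$, and $\alpha \mapsto d(\alpha, \mc{Q})$ is subanalytic on $\mc{H}$. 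Both $h$ and $d(\cdot,\mc{Q})$ are then continuous subanalytic functions on the compact set $\mc{H}$ with the \emph{same} zero set $\mc{Q}$, so the \L{}ojasiewicz inequality for subanalytic functions produces constants $c>0$ and $\nu>0$ with $d(\alpha,\mc{Q})^\nu \le c\, h(\alpha)$ for all $\alpha\in\mc{H}$; taking $C_4 = 1/c$ finishes the proof. Working directly with $\mc{Q}$ on the compact domain $\mc{H}$, rather than with the global zero set of $h$, is what makes the zero set match $\mc{Q}$ exactly.

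I expect the analyticity step to be the main obstacle: one has to check that the expectation defining $h$ genuinely inherits analyticity from the integrand uniformly over the compact parameter box, which is routine when $X$ is compactly supported but otherwise needs the moment control used elsewhere in the paper. It is also worth stressing that the non-effective exponent $\nu$ coming from \L{}ojasiewicz cannot in general be replaced by $2$ via a naive Hessian argument: $\mc{Q}$ is genuinely singular --- it contains, for instance, an entire affine piece through every zero node as well as the node-splitting directions --- and along some approaches to $\mc{Q}$ the function $h$ vanishes to order strictly larger than $2$; e.g.\ near a zero node, perturbing the incoming weights by $\delta u$ and the outgoing weight by $\delta v$ makes $h$ of order $(\delta v)^2\|\delta u\|^2$, that is of order $d(\alpha,\mc{Q})^4$. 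Hence the analytic/subanalytic machinery is really needed, and one should not hope for a clean closed form for $\nu$.
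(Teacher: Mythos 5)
Your proposal is correct and follows essentially the same route as the paper: the paper's proof is a one-line appeal to the {\L}ojasiewicz inequality on the compact set $\mc{H}$, using exactly the observation that $\mc{Q}$ is the zero set of $R(\alpha)-\sigma^2$. You supply the supporting details (the identity $R(\alpha)-\sigma^2 = E[(f_\alpha(X)-f_{\alpha^*}(X))^2]$, the analyticity of this function in $\alpha$, and the reason the exponent $\nu$ cannot simply be taken to be $2$) that the paper leaves implicit.
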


\begin{proof}
Since $\mc{H}$ is a compact set and $\mc{Q}$ is the zero set of $R(\alpha) - \sigma^2$, we can obtain this lower bound by applying {\L}ojasiewicz inequality \cite{ji1992global}.
\end{proof}

\begin{Lemma}[Lipschitzness]
There exists $C_5 > 0$ such that
\[
| R(\alpha) - R(\beta) | \leq C_5 d(\alpha, \beta) ,\quad \forall \alpha, \beta \in \mc{H}.
\]
For any $\delta > 0$, there exists $C_6(\delta)$ such that
\[
| R_n(\alpha) - R_n(\beta) | \leq C_6  d(\alpha, \beta), \quad \forall \alpha, \beta \in \mc{H}
\] 
with probability at least $1 - \delta$.
\label{lem:Lip}
\end{Lemma}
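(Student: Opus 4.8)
The plan is to reduce both inequalities to a single pointwise estimate on the parametrization $\alpha \mapsto f_\alpha$, and then integrate for the first claim and average for the second. First I would record two elementary facts valid on $\mc{H}$: (a) $f_\alpha$ is uniformly bounded, $|f_\alpha(x)| \le C_0$ for all $x$ and all $\alpha \in \mc{H}$, since $|v^\top \tanh(ux+b_1)| \le \|v\|_1 \le \sqrt{H}\,W$ and $|b_2| \le W$; and (b) the map is Lipschitz in $\alpha$ with a slope growing at most linearly in $\|x\|$,
\[
|f_\alpha(x) - f_\beta(x)| \le C_7\,(1 + \|x\|)\, d(\alpha,\beta), \qquad \forall\, \alpha,\beta \in \mc{H},
\]
which follows by splitting $f_\alpha - f_\beta$ into the three contributions coming from the difference in $v$, in $(u,b_1)$, and in $b_2$, using that $\tanh$ is bounded by $1$ and $1$-Lipschitz together with the coordinate bound $\|u_\alpha\|, \|v_\alpha\|, \ldots \le \sqrt{H}\,W$ on $\mc{H}$.

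For the expected-risk bound I would write
\[
R(\alpha) - R(\beta) = E\big[(f_\beta(X) - f_\alpha(X))\,(2Y - f_\alpha(X) - f_\beta(X))\big],
\]
bound the first factor by (b) and the second by $2|Y| + 2C_0$ via (a), and conclude $|R(\alpha) - R(\beta)| \le C_7\, d(\alpha,\beta)\, E\big[(1+\|X\|)(2|Y| + 2C_0)\big]$. The expectation is a finite constant $C$: since $Y = f_{\alpha^*}(X) + \epsilon$ with $\epsilon \sim \mc{N}(0,\sigma^2)$ independent of $X$, we have $E|Y| \le C_0 + E|\epsilon| < \infty$ and $E[\|X\|\,|Y|] \le C_0\, E\|X\| + E\|X\|\,E|\epsilon| < \infty$ under the moment condition on the design inherited from Lemma~\ref{lem:generalization}. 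This gives $C_5 = C_7 C$.

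For the empirical bound the same algebra yields $|R_n(\alpha) - R_n(\beta)| \le C_7\, d(\alpha,\beta)\, S_n$ with $S_n := \frac{1}{n}\sum_{k=1}^n (1+\|X_k\|)(2|Y_k| + 2C_0)$, and crucially $S_n$ does not depend on $\alpha,\beta$. Since $E[S_n] = C$, Markov's inequality gives $P(S_n > C/\delta) \le \delta$, so on an event of probability at least $1-\delta$ the desired inequality holds simultaneously for every $\alpha,\beta \in \mc{H}$ with $C_6(\delta) = C_7 C/\delta$. The only delicate point is exactly this last one: the ``slope'' factor $2Y - f_\alpha(X) - f_\beta(X)$ is unbounded because of the Gaussian noise, which is what forces the high-probability qualifier in the empirical statement; everything else is a routine consequence of the boundedness of the parameters and of $\tanh$. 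A Chebyshev bound on $S_n$, or an appeal to the law of large numbers, would improve the dependence on $\delta$ but is not needed here; alternatively one could combine Lemma~\ref{lem:generalization} with the expected-risk bound, at the cost of an additive $O(\log n/\sqrt{n})$ term instead of a clean Lipschitz constant.
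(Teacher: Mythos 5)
Your proof is correct. Note that the paper itself does not prove this lemma at all---it simply defers to Section 5.3 of the cited reference---so your argument is a self-contained replacement rather than a variant of an in-paper proof. The structure is the standard one and it is sound: the difference-of-squares factorization $R(\alpha)-R(\beta)=E[(f_\beta(X)-f_\alpha(X))(2Y-f_\alpha(X)-f_\beta(X))]$, the pointwise bound $|f_\alpha(x)-f_\beta(x)|\le C_7(1+\|x\|)d(\alpha,\beta)$, and, for the empirical version, the observation that the random factor $S_n=\frac1n\sum_k(1+\|X_k\|)(2|Y_k|+2C_0)$ does not depend on $(\alpha,\beta)$, so a single Markov bound yields a Lipschitz constant valid \emph{uniformly} over $\mc{H}$ on one event of probability $1-\delta$. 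That uniformity is exactly the delicate point, and you handle it correctly. Two small remarks: your bound $\|v\|_1\le\sqrt{H}W$ should be $\|v\|_1\le HW$ (harmless, since only the existence of a constant matters); and the finiteness of $E\|X\|$ and $E[\|X\|\,|Y|]$ is genuinely needed for your constants, whereas the paper's Assumption 2.1 only constrains the parameters and is silent about moments of $X$---your appeal to the conditions implicit in Lemma \ref{lem:generalization} is the right hedge, but be aware that this hypothesis is nowhere stated explicitly in the present paper.
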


\begin{proof}
The proof can be found in Section 5.3 in \cite{dinh2020consistent}.
\end{proof}

\subsection{Model selection consistency of Adaptive group Lasso}

Now, we are ready to prove that Adaptive group Lasso is structural consistent.
The first step is deriving the convergence rate of group Lasso.

\begin{Theorem}
Assume that $\zeta_n \to 0$ and $\zeta_n \sqrt{n}/ \log(n) \to \infty$.
For any $\delta >0$, when $n$ is sufficiently large, we have 
\[
d(\hat{\alpha}^{GL}_n, \mc{\overline H}^*) \leq 2  C_3 \frac{\log n}{\zeta_n \sqrt{n}} + (1+\sqrt{H}) \left ( \frac{4C_3}{C_4} \frac{\log n}{\sqrt{n}} + \frac{2C_7}{C_4} \zeta_n^{\nu/(\nu - 1)} \right )^{1/\nu}
\]
with probability at least $1 - \delta$.
\label{thm:gl}
\end{Theorem}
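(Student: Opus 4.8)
The plan is to bound $d(\hat{\alpha}^{GL}_n, \mc{\overline H}^*)$ by exploiting the defining optimality of $\hat\alpha^{GL}_n$ as a minimizer of $R_n(\alpha) + \zeta_n L(\alpha)$, comparing it against a well-chosen competitor in $\mc{\overline H}^*$. First I would pick $\alpha^*\in\mc{\overline H}^*$ (a minimal optimal network embedded in $\mc{H}$, so $R(\alpha^*)=\sigma^2$ and $L(\alpha^*)$ is the minimal group-Lasso penalty among networks with the same i/o map). From $R_n(\hat\alpha^{GL}_n)+\zeta_n L(\hat\alpha^{GL}_n) \le R_n(\alpha^*)+\zeta_n L(\alpha^*)$, I would move the risk terms to one side and the penalty terms to the other, then replace $R_n$ by $R$ using the generalization bound (Lemma \ref{lem:generalization}) with an $O(\log n/\sqrt n)$ error on both evaluations. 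This yields, with probability $1-\delta$,
\[
R(\hat\alpha^{GL}_n) - \sigma^2 \le \zeta_n\bigl(L(\alpha^*) - L(\hat\alpha^{GL}_n)\bigr) + 2C_3\frac{\log n}{\sqrt n}.
\]

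Next I would split into two regimes according to whether $\hat\alpha^{GL}_n$ is close to $\mc{Q}$. Using Lemma \ref{lem:ineq}, $R(\hat\alpha^{GL}_n)-\sigma^2 \ge C_4\, d(\hat\alpha^{GL}_n,\mc{Q})^\nu$, so the displayed inequality controls $d(\hat\alpha^{GL}_n,\mc{Q})$ — but only after I also control the term $\zeta_n(L(\alpha^*)-L(\hat\alpha^{GL}_n))$, which could a priori be positive. The idea is that $L(\hat\alpha^{GL}_n)$ cannot be much smaller than $L(\alpha^*)$: if it were, the corresponding network would have to be far from $\mc{Q}$ (small penalty forces small weights, forcing a drop in the number of effective nodes below $H^*$, which by minimality of $f_{\alpha^*}$ cannot reproduce the i/o map), and then the left side $R(\hat\alpha^{GL}_n)-\sigma^2$ would be bounded below by a positive constant, contradicting the right side for large $n$ (since $\zeta_n\to 0$ and $\log n/\sqrt n\to 0$). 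Quantifying this is where the exponent $\nu/(\nu-1)$ and the constant $C_7$ enter: I would use Young's inequality on a product of the form $\zeta_n \cdot (\text{something})$ to trade $\zeta_n L$ against a small multiple of $d(\cdot,\mc{Q})^\nu$, absorbing the latter into the left side via Lemma \ref{lem:ineq}, and leaving a residual $C_7\zeta_n^{\nu/(\nu-1)}$. This gives
\[
d(\hat\alpha^{GL}_n,\mc{Q})^\nu \le \frac{4C_3}{C_4}\frac{\log n}{\sqrt n} + \frac{2C_7}{C_4}\zeta_n^{\nu/(\nu-1)},
\]
and since $\hat\alpha^{GL}_n$ then lies in a small neighbourhood of $\mc{Q}$, I can assume $\hat\alpha^{GL}_n\in\mc{Q}$ (or handle the nearby case by Lipschitz perturbation, paying the $(1+\sqrt H)$ factor via Lemma \ref{lem:LipL}).

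Finally, to upgrade from $d(\cdot,\mc{Q})$ to $d(\cdot,\mc{\overline H}^*)$, I would invoke Lemma \ref{lem:dominateL}: for $\beta=\hat\alpha^{GL}_n\in\mc{Q}$ there is $\beta^*\in\mc{\overline H}^*$ with $L(\beta)-L(\beta^*)\ge \min\{C_2, d(\beta,\mc{\overline H}^*)\}$. Combining this with the penalty-comparison inequality (now $L(\hat\alpha^{GL}_n)\le L(\alpha^*) + 2C_3\log n/(\zeta_n\sqrt n)$ after using $R(\hat\alpha^{GL}_n)-\sigma^2\ge 0$ and that $L(\alpha^*)$ is minimal over $\mc{\overline H}^*$), and noting $\zeta_n\sqrt n/\log n\to\infty$ makes the slack $2C_3\log n/(\zeta_n\sqrt n)$ eventually smaller than $C_2$, forces $d(\hat\alpha^{GL}_n,\mc{\overline H}^*)\le 2C_3\log n/(\zeta_n\sqrt n)$ — except for the discrepancy between $\hat\alpha^{GL}_n$ and its projection onto $\mc{Q}$, which contributes the second summand via Lemma \ref{lem:LipL}. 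Adding the two contributions yields the stated bound. The main obstacle, and the step I would spend the most care on, is the Young-inequality bookkeeping that produces exactly the exponent $\nu/(\nu-1)$ and cleanly separates the contribution of $d(\cdot,\mc{Q})$ (absorbed) from the residual $\zeta_n^{\nu/(\nu-1)}$ term — together with verifying that "small penalty $\Rightarrow$ far from $\mc{Q}$" can be made quantitative enough to feed into Lemma \ref{lem:ineq} without circularity.
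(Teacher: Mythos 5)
Your proposal follows essentially the same route as the paper's proof: the basic inequality for the penalized minimizer, the generalization bound of Lemma \ref{lem:generalization}, the {\L}ojasiewicz-type lower bound of Lemma \ref{lem:ineq}, Young's inequality to absorb the penalty term and leave the $\zeta_n^{\nu/(\nu-1)}$ residual, Lemma \ref{lem:dominateL} to pass from $\mc{Q}$ to $\mc{\overline H}^*$, and a final triangle inequality. The one step where your sketch is materially looser than the paper is the control of $\zeta_n\bigl(L(\alpha^*)-L(\hat\alpha^{GL}_n)\bigr)$: the qualitative ``small penalty $\Rightarrow$ far from $\mc{Q}$'' contradiction you describe does not by itself produce a bound of the form $\zeta_n\cdot O\bigl(d(\hat\alpha^{GL}_n,\mc{Q})\bigr)$, which is what Young's inequality needs in order to be absorbed into $C_4\,d(\hat\alpha^{GL}_n,\mc{Q})^\nu$. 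The paper sidesteps this entirely by taking the competitor in the first stage to be $\beta_n$, the projection of $\hat\alpha^{GL}_n$ onto $\mc{Q}$, so that $L(\beta_n)-L(\hat\alpha^{GL}_n)\le\sqrt{H}\,d(\hat\alpha^{GL}_n,\mc{Q})$ follows immediately from Lemma \ref{lem:LipL}; your fixed competitor $\alpha^*\in\mc{\overline H}^*$ can be made to work as well, but only after the additional observation that $L(\alpha^*)\le L(\beta_n)$ (which follows from Lemma \ref{lem:dominateL} together with the fact that all elements of $\mc{\overline H}^*$ share the same penalty value), so that the same Lipschitz bound applies. With that patch your argument matches the paper's, including the second stage where Lemma \ref{lem:dominateL} is applied to the projection $\beta_n$ rather than to $\hat{\alpha}^{GL}_n$ itself.
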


\begin{proof}
Since $\mc{Q}$ is a closed set, there exists $\beta_n \in \mc{Q}$ such that $d(\hat{\alpha}^{GL}_n, \beta_n) = d(\hat{\alpha}^{GL}_n, \mc{Q})$.
Applying Lemma \ref{lem:LipL}, \ref{lem:generalization}, and \ref{lem:ineq}, for $\delta > 0$, we have
\begin{align*}
C_4 d(\hat{\alpha}^{GL}_n, \beta_n)^\nu &\leq R(\hat{\alpha}^{GL}_n) - R(\beta_n) \leq 2  C_3 \frac{\log n}{\sqrt{n}} + R_n(\hat{\alpha}^{GL}_n) - R_n(\beta_n) \\
& \leq 2  C_3 \frac{\log n}{\sqrt{n}} + \zeta_n \left[ L(\beta_n) - L(\hat{\alpha}^{GL}_n) \right] \\
&\leq 2  C_3 \frac{\log n}{\sqrt{n}}  + \zeta_n \sqrt{H} d(\hat{\alpha}^{GL}_n, \beta_n)
\end{align*}
with probability at least $1 - \delta$.
The last inequality holds because of the fact that $\ell_1$-penalty is a Lipschitz function.
By Young's inequality,
\[
\zeta_n \sqrt{H} d(\hat{\alpha}^{GL}_n, \beta_n) \leq C_7 \zeta_n^{\nu/(\nu - 1)} + \frac{C_4}{2} d(\hat{\alpha}^{GL}_n, \beta_n)^\nu.
\]
Hence
\begin{equation}
d(\hat{\alpha}^{GL}_n, \beta_n)^\nu \leq \frac{4C_3}{C_4} \frac{\log n}{\sqrt{n}} + \frac{2C_7}{C_4} \zeta_n^{\nu/(\nu - 1)}
\label{eq:1}
\end{equation}
with probability at least $1 - \delta$.

By Lemma \ref{lem:dominateL}, there exists $\beta^*_n \in \mc{\overline H}^*$ such that
\[
L(\beta_n) - L(\beta^*_n) \geq \min \left \{C_2, d(\beta_n, \mc{\overline H}^*) \right \}.
\]
We have
\begin{align*}
L(\beta^*_n) - L(\hat \alpha_{n_k}^{GL}) &= L(\beta^*_n) - L(\beta_n) + L(\beta_n) - L(\hat \alpha_{n_k}^{GL}) \\
&\leq - \min \left \{C_2, d(\beta_n, \mc{\overline H}^*) \right \} +  \sqrt{H} d(\hat{\alpha}^{GL}_n, \beta_n).
\end{align*}
By Lemma \ref{lem:generalization}, for $\delta >0 $,
\begin{align*}
0 \leq R(\hat \alpha_n^{GL})  - R( \beta^*_n) &\le R_n(\hat \alpha_n^{GL}) - R_n(\beta^*_n) + 2  C_3 \frac{\log n}{\sqrt{n}} \\
&\le 2  C_3 \frac{\log n}{\sqrt{n}} + \zeta_n \left[ L(\beta^*_n) - L(\hat \alpha_n^{GL})\right]
\end{align*}
with probability at least $1 - \delta$.
Therefore
\begin{align*}
0 & \leq 2  C_3 \frac{\log n}{\zeta_{n} \sqrt{n}} + L(\beta^*_n) - L(\hat \alpha_{n}^{GL}) \leq 2  C_3 \frac{\log n}{\zeta_{n} \sqrt{n}} - \min \left \{C_2, d(\beta_n, \mc{\overline H}^*) \right \} +  \sqrt{H} d(\hat{\alpha}^{GL}_n, \beta_n).
\end{align*}
By equation \eqref{eq:1}, 
\[
\min \left \{C_2, d(\beta_n, \mc{\overline H}^*) \right \} \leq 2  C_3 \frac{\log n}{\zeta_{n} \sqrt{n}} + \sqrt{H}  \left ( \frac{4C_3}{C_4} \frac{\log n}{\sqrt{n}} + \frac{2C_7}{C_4} \zeta_n^{\nu/(\nu - 1)} \right )^{1/\nu}
\]
with probability at least $1 - \delta$.
Thus, for sufficiently large $n$, we get
\begin{equation}
d(\beta_n, \mc{\overline H}^*) \leq 2  C_3 \frac{\log n}{\zeta_{n} \sqrt{n}} + \sqrt{H}  \left ( \frac{4C_3}{C_4} \frac{\log n}{\sqrt{n}} + \frac{2C_7}{C_4} \zeta_n^{\nu/(\nu - 1)} \right )^{1/\nu}
\label{eq:addbound}
\end{equation}
with probability at least $1 - \delta$.

From \eqref{eq:1} and \eqref{eq:addbound}, we conclude that, for sufficiently large $n$,
\begin{align*}
d(\hat{\alpha}^{GL}_n, \mc{\overline H}^*) &\leq d(\hat{\alpha}^{GL}_n, \beta_n) + d(\beta_n, \mc{\overline H}^*) \\
&\leq 2  C_3 \frac{\log n}{\zeta_{n} \sqrt{n}} + (1 + \sqrt{H})  \left ( \frac{4C_3}{C_4} \frac{\log n}{\sqrt{n}} + \frac{2C_7}{C_4} \zeta_n^{\nu/(\nu - 1)} \right )^{1/\nu}
\end{align*}
with probability at least $1 - \delta$.
\end{proof}

A direct consequence of Theorem \ref{thm:gl} is that the group Lasso estimator is consistent.
In the second step, we prove that the Adaptive group Lasso estimate has at most $H^*$ non-zero nodes.

\begin{Theorem}
Assume that $\zeta_n \to 0$, $\zeta_n \sqrt{n}/ \log(n) \to \infty$, $\lambda_n \to 0$, and
\[
\frac{\lambda_n^{1/\gamma}}{2  C_3 \frac{\log n}{\zeta_n \sqrt{n}} + \sqrt{H} \left ( \frac{4C_3}{C_4} \frac{\log n}{\sqrt{n}} + \frac{2C_7}{C_4} \zeta_n^{\nu/(\nu - 1)} \right )^{1/\nu}} \to \infty.
\]
For any $\delta >0$, when $n$ is sufficiently large, $f_{\hat \alpha_n}$ has at most $H^*$ non-zero nodes with probability at least $1 - \delta$.
\label{thm:nodes}
\end{Theorem}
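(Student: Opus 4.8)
The plan is to convert the consistency rate for the group Lasso estimator in Theorem~\ref{thm:gl} into a blow-up of the adaptive weights sitting on the redundant hidden nodes, so that keeping any such node active in $\hat\alpha_n$ would strictly decrease the penalized objective and hence contradict optimality. For $\alpha\in\mc{H}$ and $1\le i\le H$ write $w_i(\alpha)=(u_\alpha^{[:,i]},v_\alpha^{[i]},b_{1_\alpha}^{[i]})$ for the $i$-th node block (this is the $w_{i_\alpha}$ of the definitions above), set $\Phi_n(\alpha)=R_n(\alpha)+\lambda_n M_n(\alpha)$ so that $\hat\alpha_n=\argmin_{\alpha\in\mc{H}}\Phi_n(\alpha)$, and let
\[
\rho_n=2C_3\frac{\log n}{\zeta_n\sqrt n}+(1+\sqrt H)\left(\frac{4C_3}{C_4}\frac{\log n}{\sqrt n}+\frac{2C_7}{C_4}\zeta_n^{\nu/(\nu-1)}\right)^{1/\nu}
\]
be the bound from Theorem~\ref{thm:gl}. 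Since $1+\sqrt H\le 2\sqrt H$, $\rho_n$ is at most twice the quantity in the denominator of the hypothesis, so the hypothesis yields $\lambda_n/\rho_n^{\gamma}\to\infty$. Moreover, since $\mc{\overline H}^*$ is finite and every network in it is minimal on its non-zero nodes (Lemma~\ref{lem:minimal}), there is a constant $c_0>0$ with $\|w_i(\beta^*)\|\ge c_0$ for every non-zero node $i$ of every $\beta^*\in\mc{\overline H}^*$.

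Fix $\delta>0$. I would apply Theorem~\ref{thm:gl} and the empirical Lipschitz bound of Lemma~\ref{lem:Lip} each with level $\delta/2$ and intersect the two events to obtain $E_n$ with $P(E_n)\ge 1-\delta$ for $n$ large. On $E_n$ we have $d(\hat\alpha^{GL}_n,\mc{\overline H}^*)\le\rho_n\to 0$; since $\mc{\overline H}^*$ is finite, choose $\beta_n\in\mc{\overline H}^*$ attaining the distance and let $\mc{Z}$ be the set of its $H-H^*$ zero nodes. Two estimates follow at once: for $j\in\mc{Z}$,
\[
\|w_j(\hat\alpha^{GL}_n)\|=\|w_j(\hat\alpha^{GL}_n)-w_j(\beta_n)\|\le d(\hat\alpha^{GL}_n,\beta_n)\le\rho_n,
\]
whereas for $i\notin\mc{Z}$, $\|w_i(\hat\alpha^{GL}_n)\|\ge\|w_i(\beta_n)\|-\rho_n\ge c_0-\rho_n\ge c_0/2$ once $n$ is large. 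In particular $M_n(\beta_n)<\infty$ (the zero nodes of $\beta_n$ contribute $0$, the others have finite positive adaptive weights), so $\Phi_n(\hat\alpha_n)\le\Phi_n(\beta_n)<\infty$.

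I then argue by contradiction. Suppose $f_{\hat\alpha_n}$ has at least $H^*+1$ non-zero nodes. Since the complement of $\mc{Z}$ has exactly $H^*$ nodes, some $j_0\in\mc{Z}$ has $w_{j_0}(\hat\alpha_n)\ne 0$. If $\|w_{j_0}(\hat\alpha^{GL}_n)\|=0$, then every $\alpha$ with $w_{j_0}(\alpha)\ne 0$ has $M_n(\alpha)=+\infty$; together with $\Phi_n(\hat\alpha_n)<\infty$ and minimality this would force $w_{j_0}(\hat\alpha_n)=0$, contradicting the choice of $j_0$, so in fact $0<\|w_{j_0}(\hat\alpha^{GL}_n)\|\le\rho_n$. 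Let $\alpha'\in\mc{H}$ be $\hat\alpha_n$ with the block $w_{j_0}$ replaced by $0$; then $d(\alpha',\hat\alpha_n)=\|w_{j_0}(\hat\alpha_n)\|$ and $M_n(\alpha')=M_n(\hat\alpha_n)-\|w_{j_0}(\hat\alpha_n)\|/\|w_{j_0}(\hat\alpha^{GL}_n)\|^{\gamma}$, so by the empirical Lipschitz bound of Lemma~\ref{lem:Lip},
\[
\Phi_n(\alpha')-\Phi_n(\hat\alpha_n)\le C_6\|w_{j_0}(\hat\alpha_n)\|-\lambda_n\frac{\|w_{j_0}(\hat\alpha_n)\|}{\|w_{j_0}(\hat\alpha^{GL}_n)\|^{\gamma}}\le\|w_{j_0}(\hat\alpha_n)\|\left(C_6-\frac{\lambda_n}{\rho_n^{\gamma}}\right),
\]
which is strictly negative for $n$ large because $\lambda_n/\rho_n^{\gamma}\to\infty$. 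This contradicts $\hat\alpha_n\in\argmin\Phi_n$. Hence on $E_n$, for $n$ large, no node of $\mc{Z}$ is active in $\hat\alpha_n$, so $f_{\hat\alpha_n}$ has at most $H^*$ non-zero nodes, as claimed.

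I expect the substantive difficulty to be already absorbed in Theorem~\ref{thm:gl} (controlling the size of the redundant-node weights of the group Lasso solution), which is available here; the one-node "peeling" argument carries little weight on its own. The points that need care are essentially bookkeeping: confirming that the denominator of the hypothesis matches the rate $\rho_n$ up to a constant factor, reading the $0/0$ (and ``positive over zero'') convention in $M_n$ so that the blow-up of the adaptive weights is invoked legitimately, and checking that zeroing one node keeps the parameter in $\mc{H}$ while perturbing the empirical risk by at most $C_6$ times the removed block's norm.
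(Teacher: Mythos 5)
Your proof is correct and follows essentially the same route as the paper's: both convert the group-Lasso rate of Theorem~\ref{thm:gl} into a blow-up $\lambda_n/\rho_n^{\gamma}\to\infty$ of the adaptive weights on the redundant nodes and play this against the Lipschitz constant $C_6$ of the empirical risk, so that keeping any such node active contradicts the optimality of $\hat\alpha_n$. The differences are cosmetic --- you zero out one offending node at a time rather than all of $\mc{I}_n$ at once, and you bound the redundant blocks of $\hat{\alpha}^{GL}_n$ directly by $d(\hat{\alpha}^{GL}_n,\mc{\overline H}^*)$ instead of routing through the group-Lasso optimality inequality, which slightly streamlines the argument (and you make explicit the degenerate case $\|w_{j_0}(\hat{\alpha}^{GL}_n)\|=0$ that the paper leaves implicit).
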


\begin{proof}
Let $\alpha^*_n \in \mc{\overline H}^*$ such that $d(\hat{\alpha}^{GL}_n, \alpha^*_n) = d(\hat{\alpha}^{GL}_n, \mc{\overline H}^*)$.
We denote the set of zero nodes of $f_{\alpha^*_n}$ by $\mc{I}_n$. 
Let $\pi_n : \mc{H} \to \mc{H}$ be a function that set all the weights and biases associated with nodes in $\mc{I}_n$ to $0$.
That is,
\[
w_{i_{\pi_n(\alpha)}} = (u_{\pi_n(\alpha)}^{[:,i]}, v_{\pi_n(\alpha)}^{[i]}, b_{1_{\pi_n(\alpha)}}^{[i]}) = 
\begin{cases}
0 & \text{if } i \in \mc{I}_n \\
(u_{\alpha}^{[:,i]}, v_{\alpha}^{[i]}, b_{1_{\alpha}}^{[i]}) & \text{if } i \not\in \mc{I}_n.
\end{cases}
\]

We have,
\begin{equation}
\begin{aligned}
\sum_{i \in \mc{I}_n}\|w_{i_{\hat{\alpha}^{GL}_n}}\| & \leq \frac{ R_n(\alpha^*_n) - R_n(\hat{\alpha}^{GL}_n)}{\zeta_n} + \sum_{i \notin \mc{I}_n}  \left( \|w_{i_{\alpha^*_n}}\|  - \|w_{i_{\hat{\alpha}^{GL}_n}}\|  \right) \\
&\leq 2  C_3 \frac{\log n}{\zeta_n \sqrt{n}} + \frac{R(\alpha^*_n) - R(\hat{\alpha}^{GL}_n)}{\zeta_n} +  \sum_{i \notin \mc{I}_n} \left |  \|w_{i_{\alpha^*_n}}\|  - \|w_{i_{\hat{\alpha}^{GL}_n}}\| \right | \\
& \leq C_3 \frac{\log n}{\zeta_n \sqrt{n}} +  \sum_{i \notin \mc{I}_n} \left |  \|w_{i_{\alpha^*_n}}\|  - \|w_{i_{\hat{\alpha}^{GL}_n}}\| \right |
\end{aligned}
\label{eq:2}
\end{equation}
with probability at least $1 - \delta$.

On the other hand, $d(\hat{\alpha}^{GL}_n, \pi_n(\hat{\alpha}^{GL}_n)) \geq \| w_{i_{\hat{\alpha}^{GL}_n}}\|$ for all $i \in \mc{I}_n$.
So,
\begin{equation}
\lambda_n \sum_{i \in \mc{I}_n} \frac{\| w_{i_{\hat \alpha_n}} \|}{d(\hat{\alpha}^{GL}_n, \pi_n(\hat{\alpha}^{GL}_n)) ^\gamma} \leq \lambda_n \sum_{i \in \mc{I}_n} \frac{\|w_{i_{\hat \alpha_n}}\|}{\| w_{i_{\hat{\alpha}^{GL}_n}}\|^\gamma} \leq R_n(\pi_n(\hat \alpha_n))  -  R_n(\hat \alpha_n).
\label{eq:4}
\end{equation}
By Lemma \ref{lem:Lip}, we obtain
\begin{equation}
R_n(\pi_n(\hat \alpha_n))  -  R_n(\hat \alpha_n) \leq C_6 d(\hat \alpha_n, \pi_n(\hat \alpha_n)) \leq C_6 \sum_{i \in \mc{I}_n}{\| w_{i_{\hat \alpha_n}} \|}
\label{eq:Lip}
\end{equation}
with probability at least $1 - \delta$.

Assume that $\sum_{i \in \mc{I}_n}\|w_{i_{\hat{\alpha}_n}}\| > 0$.
From \eqref{eq:2} and Theorem \ref{thm:gl}, we have
\begin{equation}
\begin{aligned}
d(\hat{\alpha}^{GL}_n, \pi_n(\hat{\alpha}^{GL}_n)) &\leq \sum_{i \in \mc{I}_n}\|w_{i_{\hat{\alpha}^{GL}_n}}\| \leq 2  C_3 \frac{\log n}{\zeta_n \sqrt{n}} +  \sum_{i \notin \mc{I}_n} \left |  \|w_{i_{\alpha^*_n}}\|  - \|w_{i_{\hat{\alpha}^{GL}_n}}\| \right | \\
& \leq 2  C_3 \frac{\log n}{\zeta_n \sqrt{n}} + \sqrt{H} d(\hat{\alpha}^{GL}_n, \alpha^*_n) = C_3 \frac{\log n}{\zeta_n \sqrt{n}} + \sqrt{H} d(\hat{\alpha}^{GL}_n, \mc{\overline H}^*) \\
& \leq 2  C_3(1 + \sqrt{H}) \frac{\log n}{\zeta_n \sqrt{n}} + \sqrt{H}(1 + \sqrt{H}) \left ( \frac{4C_3}{C_4} \frac{\log n}{\sqrt{n}} + \frac{2C_7}{C_4} \zeta_n^{\nu/(\nu - 1)} \right )^{1/\nu}
\end{aligned}
\label{eq:5}
\end{equation}
with probability at least $1 - \delta$.

Combining \eqref{eq:4}, \eqref{eq:Lip}, and \eqref{eq:5}, we get
\[
\frac{\lambda_n^{1/\gamma}}{2  C_3 \frac{\log n}{\zeta_n \sqrt{n}} + \sqrt{H} \left ( \frac{4C_3}{C_4} \frac{\log n}{\sqrt{n}} + \frac{2C_7}{C_4} \zeta_n^{\nu/(\nu - 1)} \right )^{1/\nu}} \leq (1 + \sqrt{H}) C_6^{1/\gamma}
\]
which is a contradiction when $n$ is large enough.
Therefore, $\sum_{i \in \mc{I}_n}\|w_{i_{\hat{\alpha}_n}}\| = 0$ with probability at least $1 - \delta$. 
Hence, every node in $\mc{I}_n$ is a zero node of $\hat{\alpha}_n$.
Recall that $\mc{I}_n$ is the set of zero nodes of $f_{\alpha^*_n}$ where $\alpha^*_n \in \mc{\overline H}^*$.
So, $| \mc{I}_n | = H - H^*$.
That is, $f_{\hat \alpha_n}$ has at most $H^*$ non-zero nodes
\end{proof}

Finally, we derive the convergence rate of the Adaptive group Lasso.

\begin{Theorem}
Assume that $\zeta_n \to 0$, $\zeta_n \sqrt{n}/ \log(n) \to \infty$, $\lambda_n \to 0$, and
\[
\frac{\lambda_n^{1/\gamma}}{2  C_3 \frac{\log n}{\zeta_n \sqrt{n}} + \sqrt{H} \left ( \frac{4C_3}{C_4} \frac{\log n}{\sqrt{n}} + \frac{2C_7}{C_4} \zeta_n^{\nu/(\nu - 1)} \right )^{1/\nu}} \to \infty.
\]
For any $\delta >0$, when $n$ is sufficiently large, we have
\[
d(\hat \alpha_n, \mc{\overline H}^*) \leq \left ( \frac {2 C_3}{C_4} \frac{\log n}{\sqrt{n}} + \frac{C_8}{C_4} \lambda_n \right )^{1/\nu}.
\]
with probability at least $1 - \delta$.
\label{thm:agl}
\end{Theorem}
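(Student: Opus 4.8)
The plan is to follow the architecture of the proof of Theorem~\ref{thm:gl}, but with the group Lasso penalty $L$ replaced by the adaptive penalty $M_n$, and with Theorem~\ref{thm:nodes} used to pass from $\mc{Q}$ to $\mc{\overline H}^*$. Fix $\delta>0$ and work on the intersection of the high-probability events supplied by Lemma~\ref{lem:generalization}, Theorem~\ref{thm:gl}, and Theorem~\ref{thm:nodes}; a union bound keeps the total failure probability below $\delta$ once $n$ is large. Let $\alpha^*_n\in\mc{\overline H}^*$ realize $d(\hat\alpha^{GL}_n,\alpha^*_n)=d(\hat\alpha^{GL}_n,\mc{\overline H}^*)$ and let $\mc{I}_n$ be its set of zero nodes, so $|\mc{I}_n|=H-H^*$. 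Comparing $\hat\alpha_n$ with $\alpha^*_n$ in the adaptive objective and dropping the nonnegative term $\lambda_n M_n(\hat\alpha_n)$ gives $R_n(\hat\alpha_n)-R_n(\alpha^*_n)\le\lambda_n M_n(\alpha^*_n)$.

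The first key step is a uniform bound $M_n(\alpha^*_n)\le C_8$. Zero nodes of $\alpha^*_n$ contribute nothing ($0/0=0$ by convention), so only its $H^*$ nonzero nodes matter; their numerators $\|w_{i_{\alpha^*_n}}\|$ are at most $W\sqrt{d+2}$ by the standing assumption. For the denominators, $c_0:=\min\{\|w_{i_\alpha}\|:\alpha\in\mc{\overline H}^*,\ i\text{ a nonzero node of }\alpha\}$ is strictly positive because $\mc{\overline H}^*$ is finite; Theorem~\ref{thm:gl} forces $d(\hat\alpha^{GL}_n,\alpha^*_n)<c_0/2$ for large $n$, so $\|w_{i_{\hat\alpha^{GL}_n}}\|\ge c_0/2$ at every nonzero node $i$ of $\alpha^*_n$, whence $M_n(\alpha^*_n)\le H^*W\sqrt{d+2}\,(c_0/2)^{-\gamma}=:C_8$. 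Combining this with the previous inequality and the generalization bound (Lemma~\ref{lem:generalization}) yields $R(\hat\alpha_n)-R(\alpha^*_n)\le 2C_3\frac{\log n}{\sqrt n}+C_8\lambda_n$; since $\alpha^*_n\in\mc{Q}$ we have $R(\alpha^*_n)=\sigma^2$, so Lemma~\ref{lem:ineq} gives $C_4\,d(\hat\alpha_n,\mc{Q})^\nu\le 2C_3\frac{\log n}{\sqrt n}+C_8\lambda_n$.

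It remains to replace $d(\hat\alpha_n,\mc{Q})$ by $d(\hat\alpha_n,\mc{\overline H}^*)$ at no cost in the constants. By Theorem~\ref{thm:nodes} (precisely, by what its proof establishes), for large $n$ we have $w_{i_{\hat\alpha_n}}=0$ for all $i\in\mc{I}_n$. Pick $\beta_n\in\mc{Q}$ with $d(\hat\alpha_n,\beta_n)=d(\hat\alpha_n,\mc{Q})$ ($\mc{Q}$ is compact). For $i\in\mc{I}_n$, $\|w_{i_{\beta_n}}\|=\|w_{i_{\beta_n}}-w_{i_{\hat\alpha_n}}\|\le d(\hat\alpha_n,\mc{Q})\to 0$, so eventually $\|w_{i_{\beta_n}}\|<C_1$, the constant of Lemma~\ref{lem:collapsable}. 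Conversely, by the structure of networks in $\mc{Q}$, every significant node of $\beta_n$ has its $(u,b_1)$ pair equal to $\pm$ that of some node of $\alpha^*$, hence $\|u_{i_{\beta_n}}\|\ge C_1=\min_j\|u_{j_{\alpha^*}}\|$, while $\beta_n$ has at most $H-H^*$ non-significant nodes; therefore $\{i:\|w_{i_{\beta_n}}\|<C_1\}$ has at most $H-H^*$ elements, hence equals $\mc{I}_n$. Lemma~\ref{lem:collapsable} now says that $\beta_n'$, obtained from $\beta_n$ by zeroing the nodes in $\mc{I}_n$, lies in $\mc{\overline H}^*$; and since $\hat\alpha_n$ already vanishes on $\mc{I}_n$, $d(\hat\alpha_n,\beta_n')^2=\sum_{i\notin\mc{I}_n}\|w_{i_{\hat\alpha_n}}-w_{i_{\beta_n}}\|^2\le d(\hat\alpha_n,\beta_n)^2$, so $d(\hat\alpha_n,\mc{\overline H}^*)\le d(\hat\alpha_n,\beta_n')\le d(\hat\alpha_n,\mc{Q})$. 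Substituting into the bound of the previous paragraph gives the stated estimate.

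I expect the uniform bound $M_n(\alpha^*_n)\le C_8$ to be the crux: it is essential to take $\alpha^*_n$ to be the point of $\mc{\overline H}^*$ nearest the \emph{Group Lasso} estimator, so that Theorem~\ref{thm:gl} keeps $\|w_{i_{\hat\alpha^{GL}_n}}\|$ bounded away from zero at exactly the nonzero nodes of $\alpha^*_n$; an arbitrary choice of $\alpha^*_n$ would not work, since the denominators could vanish. The $\mc{Q}\to\mc{\overline H}^*$ passage is the other delicate point, but it merely repackages the node count already carried out in the proofs of Theorem~\ref{thm:nodes} and Lemma~\ref{lem:collapsable}.
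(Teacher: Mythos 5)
Your proposal follows the paper's proof of Theorem~\ref{thm:agl} step for step: compare $\hat\alpha_n$ with the point $\alpha^*_n\in\mc{\overline H}^*$ nearest the group Lasso estimator, bound $M_n(\alpha^*_n)$ by a constant $C_8$, apply Lemmas~\ref{lem:generalization} and~\ref{lem:ineq} to control $d(\hat\alpha_n,\mc{Q})$, and then use Theorem~\ref{thm:nodes} together with Lemma~\ref{lem:collapsable} to upgrade the bound on $d(\hat\alpha_n,\mc{Q})$ to one on $d(\hat\alpha_n,\mc{\overline H}^*)$. Your explicit derivation of $C_8$ (numerators bounded by $W\sqrt{d+2}$, denominators bounded below by half the minimal nonzero-node norm over the finite set $\mc{\overline H}^*$, via Theorem~\ref{thm:gl}) is exactly what the paper asserts without detail, and it is correct.

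One sub-claim in your final paragraph is false as stated: it is not true that every significant node of a network $\beta_n\in\mc{Q}$ has its $(u,b_1)$ pair equal to $\pm$ that of a node of $\alpha^*$. The set $\mc{Q}$ contains networks with groups of mutually cancelling significant nodes --- e.g.\ two nodes sharing an arbitrary $(u,b_1)$ whose output weights sum to zero --- and such nodes can have arbitrarily small $\|u\|$, so they may land in $\{i:\|w_{i_{\beta_n}}\|<C_1\}$ without being non-significant. The cardinality bound you need, $|\{i:\|w_{i_{\beta_n}}\|<C_1\}|\le H-H^*$, is nevertheless correct; the clean route is the one implicit in the proof of Lemma~\ref{lem:collapsable}: the sets $A_i=\{j:(u^{[:,j]}_{\beta_n},b_{1_{\beta_n}}^{[j]})=\pm(u^{[:,i]}_{\alpha^*},b_{1_{\alpha^*}}^{[i]})\}$ are nonempty (since $\beta_n\in\mc{Q}$) and pairwise disjoint (since $f_{\alpha^*}$ is minimal), and every node in $\bigcup_i A_i$ satisfies $\|w_{j_{\beta_n}}\|\ge\|u^{[:,j]}_{\beta_n}\|\ge C_1$, which already places $H^*$ nodes outside the small-norm set. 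With that repair your argument is complete --- and on this point you are more careful than the paper, which applies Lemma~\ref{lem:collapsable} without verifying its hypothesis that the small-norm set has exactly $H-H^*$ elements.
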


\begin{proof}
Let $\alpha^*_n \in \mc{\overline H}^*$ such that $d(\hat{\alpha}^{GL}_n, \alpha^*_n) = d(\hat{\alpha}^{GL}_n, \mc{\overline H}^*)$.
By Lemma \ref{lem:generalization}, we have, for $\delta > 0$,
\begin{align*}
R(\hat \alpha_n) - R(\alpha^*_n) &\le R_n(\hat \alpha_n) - R_n(\alpha^*_n) + 2 C_3 \frac{\log n}{\sqrt{n}} \\
&\le 2 C_3 \frac{\log n}{\sqrt{n}} + \lambda_n  [M_n(\alpha^*_n) -M_n(\hat \alpha_n)] \\
&\leq 2 C_3 \frac{\log n}{\sqrt{n}} + \lambda_n M_n(\alpha^*_n).
\end{align*}
with probability at least $1 - \delta$.

Since $\mc{\overline H}^*$ is finite, by Theorem \ref{thm:gl}, there exists $C_8$ such that when $n$ is sufficiently large, $M_n(\alpha^*_n ) \leq C_8$ with probability at least $1- \delta$.
By Lemma \ref{lem:ineq}, when $n$ is sufficiently large
\[
C_4 d(\hat \alpha_n, \mc{Q})^\nu \leq R(\hat \alpha_n) - R(\alpha^*_n) \leq 2  C_3 \frac{\log n}{\sqrt{n}} + C_8 \lambda_n
\]
with probability at least $1 - \delta$.

Let $\xi_n \in \mc{Q}$ such that $d(\hat \alpha_n, \xi_n) = d(\hat \alpha_n, \mc{Q})$.
We now prove that $\xi_n \in \mc{\overline H}^*$ when $n$ is sufficiently large.
Let $\mc{I}_n$ be the set of zero node of $f_{\alpha^*_n}$.
In the proof of Theorem \ref{thm:nodes}, we prove that nodes in $\mc{I}_n$ are zero nodes of $f_{\hat{\alpha}_n}$ when $n$ is sufficiently large.
Therefore,
\begin{equation}
\| w_{i_{\xi_n}} \| \leq d(\hat \alpha_n, \xi_n) = d(\hat \alpha_n, \mc{Q}) \leq \left ( \frac {2 C_3}{C_4} \frac{\log n}{\sqrt{n}} + \frac{C_8}{C_4} \lambda_n \right )^{1/\nu}
\label{eq:6}
\end{equation}
for all $i \in \mc{I}_n$.

Consider $\xi_n' \in \mc{H}$ such that
\[
w_{i_{\xi'_n}} = 
\begin{cases}
w_{i_{\xi_n}} & \text{if } i \notin \mc{I}_n \\
0 & \text{if } i \in \mc{I}_n.
\end{cases}
\]

By Lemma \ref{lem:collapsable} and \eqref{eq:6}, when $n$ is large enough, we have $\xi_n' \in \mc{\overline H}^*$ with probability at least $1 - \delta$.
Note that $w_{i_{\hat \alpha_n}} = 0$ for all $i \in \mc{I}_n$. 
So, 
\[
d(\hat \alpha_n, \mc{Q}) \leq d(\hat \alpha_n, \xi_n') \leq d(\hat \alpha_n, \xi_n) = d(\hat \alpha_n, \mc{Q}).
\]
Thus, $w_{i_{\xi_n}} = 0$ for all $i \in \mc{I}_n$.
That is, we have $\xi_n \in \mc{\overline H}^*$.
Therefore,
\[
d(\hat \alpha_n, \mc{\overline H}^*) = d(\hat \alpha_n, \mc{Q}) \leq \left ( \frac {2 C_3}{C_4} \frac{\log n}{\sqrt{n}} + \frac{C_8}{C_4} \lambda_n \right )^{1/\nu}.
\]
\end{proof}

Combining Theorems \ref{thm:nodes} and \ref{thm:agl} we can easily show that Adaptive group Lasso estimator is model selection consistent with appropriate choice of regularizer parameters $\zeta_n$ and $\lambda_n$.
Specifically,

\begin{Theorem}
If $\zeta_n \to 0$, $\zeta_n \sqrt{n}/ \log(n) \to \infty$, $\lambda_n \to 0$, and
\[
\frac{\lambda_n^{1/\gamma}}{2  C_3 \frac{\log n}{\zeta_n \sqrt{n}} + \sqrt{H} \left ( \frac{4C_3}{C_4} \frac{\log n}{\sqrt{n}} + \frac{2C_7}{C_4} \zeta_n^{\nu/(\nu - 1)} \right )^{1/\nu}} \to \infty,
\]
then the Adaptive group Lasso estimator is model selection consistent.
\label{thm:final}
\end{Theorem}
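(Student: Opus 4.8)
The plan is to verify directly the two requirements in the definition of model selection consistency, using Theorems \ref{thm:nodes} and \ref{thm:agl} together with the finiteness of $\mc{\overline H}^*$. Consistency is immediate: under the stated hypotheses the right-hand side of the bound in Theorem \ref{thm:agl},
\[
\left(\frac{2C_3}{C_4}\frac{\log n}{\sqrt n}+\frac{C_8}{C_4}\lambda_n\right)^{1/\nu},
\]
tends to $0$ since $\lambda_n\to 0$ and $(\log n)/\sqrt n\to 0$. Hence for every $\delta>0$ and every $\varepsilon>0$ there is $N$ with $P\big(d(\hat\alpha_n,\mc{\overline H}^*)\le\varepsilon\big)\ge 1-\delta$ for all $n\ge N$, i.e. $d(\hat\alpha_n,\mc{\overline H}^*)\to 0$ in probability, which is the first bullet.

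\textbf{Exactly $H^*$ nonzero nodes.} Every $\alpha^*\in\mc{\overline H}^*$ has exactly $H^*$ nonzero nodes, since deleting its zero nodes yields a network in $\mc{H}^*$, which by Lemma \ref{lem:minimal} is minimal with $H^*$ hidden nodes. As $\mc{\overline H}^*$ is a \emph{finite} set of parameter vectors, there is $c_0>0$ with $\|w_{i_{\alpha^*}}\|\ge c_0$ for every nonzero node $i$ of every $\alpha^*\in\mc{\overline H}^*$. Fix $\delta>0$. By Theorem \ref{thm:agl}, for $n$ large there is, with probability at least $1-\delta$, a point $\alpha^*_n\in\mc{\overline H}^*$ with $d(\hat\alpha_n,\alpha^*_n)=d(\hat\alpha_n,\mc{\overline H}^*)<c_0/2$; since $d$ is the $\ell_2$ distance between vectorizations and each $w_i$ is a sub-block of the vectorization, $\|w_{i_{\hat\alpha_n}}\|\ge\|w_{i_{\alpha^*_n}}\|-c_0/2>0$ for each of the $H^*$ nonzero nodes of $\alpha^*_n$, so $f_{\hat\alpha_n}$ has at least $H^*$ nonzero nodes on this event. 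Intersecting with the event of Theorem \ref{thm:nodes} (at most $H^*$ nonzero nodes, probability at least $1-\delta$) and relabeling $\delta$, $f_{\hat\alpha_n}$ has exactly $H^*$ nonzero nodes; moreover, arguing as in the proof of Theorem \ref{thm:nodes}, where the $H-H^*$ zero nodes of $\alpha^*_n$ are shown to remain zero in $\hat\alpha_n$, the nonzero nodes of $\hat\alpha_n$ coincide with those of $\alpha^*_n$, with probability at least $1-\delta$ for all large $n$.

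\textbf{Minimality.} It remains to check that the $H^*$-node network obtained by deleting the zero nodes of $f_{\hat\alpha_n}$ satisfies conditions (i)--(iii) of Lemma \ref{lem:minimal}. Each $\alpha^*\in\mc{\overline H}^*$ satisfies these strict conditions on its nonzero nodes, and by finiteness of $\mc{\overline H}^*$ there is a uniform margin $\rho>0$: $\|u_{\alpha^*}^{[:,i]}\|\ge\rho$, $|v_{\alpha^*}^{[i]}|\ge\rho$, and $\|(u_{\alpha^*}^{[:,i]},b_{1_{\alpha^*}}^{[i]})-(u_{\alpha^*}^{[:,j]},b_{1_{\alpha^*}}^{[j]})\|\ge\rho$ as well as $\|(u_{\alpha^*}^{[:,i]},b_{1_{\alpha^*}}^{[i]})+(u_{\alpha^*}^{[:,j]},b_{1_{\alpha^*}}^{[j]})\|\ge\rho$ for $i\ne j$, for all $\alpha^*\in\mc{\overline H}^*$ and all nonzero nodes $i,j$. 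Taking $n$ large enough that $d(\hat\alpha_n,\alpha^*_n)<\rho/4$ on the good event, each such quantity for $\hat\alpha_n$ differs from the corresponding one for $\alpha^*_n$ by at most a fixed multiple of $\rho/4$, so it stays bounded away from the forbidden values; hence $f_{\hat\alpha_n}$, restricted to its exactly $H^*$ nonzero nodes, is minimal. Intersecting the finitely many good events above, at the cost of replacing $\delta$ by a fixed multiple, establishes the second bullet, hence model selection consistency.

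\textbf{Main obstacle.} The quantitative work is carried by Theorems \ref{thm:nodes} and \ref{thm:agl}; the only genuinely new ingredient is the uniform-gap argument upgrading the pointwise, \emph{open} conditions of Lemma \ref{lem:minimal} and the pointwise count ``$H^*$ nonzero nodes'' into \emph{robust} statements stable under a small $\ell_2$ perturbation. This is exactly where finiteness of $\mc{\overline H}^*$ (inherited from Sussmann's finiteness theorem for $\mc{H}^*$) is indispensable; one must also note that, because $\alpha^*_n$ ranges over a fixed finite set of parameter vectors, the node-to-node comparison between $\hat\alpha_n$ and $\alpha^*_n$ is coordinate-wise, with no permutation or sign-flip ambiguity to resolve.
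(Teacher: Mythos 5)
Your proposal is correct and follows essentially the same route as the paper: consistency from Theorem \ref{thm:agl}, the upper bound on the number of non-zero nodes from Theorem \ref{thm:nodes}, and minimality (hence exactly $H^*$ non-zero nodes) via Lemma \ref{lem:minimal} and closeness to the finite set $\mc{\overline H}^*$. The paper's own proof is only a four-line sketch; your uniform-margin argument supplies exactly the perturbation details it leaves implicit.
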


\begin{proof}
Consistency comes directly from Theorem \ref{thm:agl}.
By Theorem \ref{thm:nodes}, the Adaptive group Lasso has at most $H^*$ non-zero nodes.
When $n$ is sufficiently large, Lemma \ref{lem:minimal} guarantees that $f_{\hat \alpha_n}$ is a minimal network.
So, it has exactly $H^*$ non-zero nodes.
\end{proof}

\section{Illustration}
\label{sec:sim}

The goal of this section is not to provide an extensive study on the performance of the destructive technique.
This has been done thoroughly in \cite{lecun1989optimal,murray2015auto, alvarez2016learning, scardapane2017group, huang2018condensenet} using both synthetic and real data.
Instead, we aim to illustrate our theoretical findings through simple experiments.

\subsection{Simulation}

\begin{figure}[ht]
\vskip 0.2in
\begin{center}
\centerline{\includegraphics[width=0.5\columnwidth]{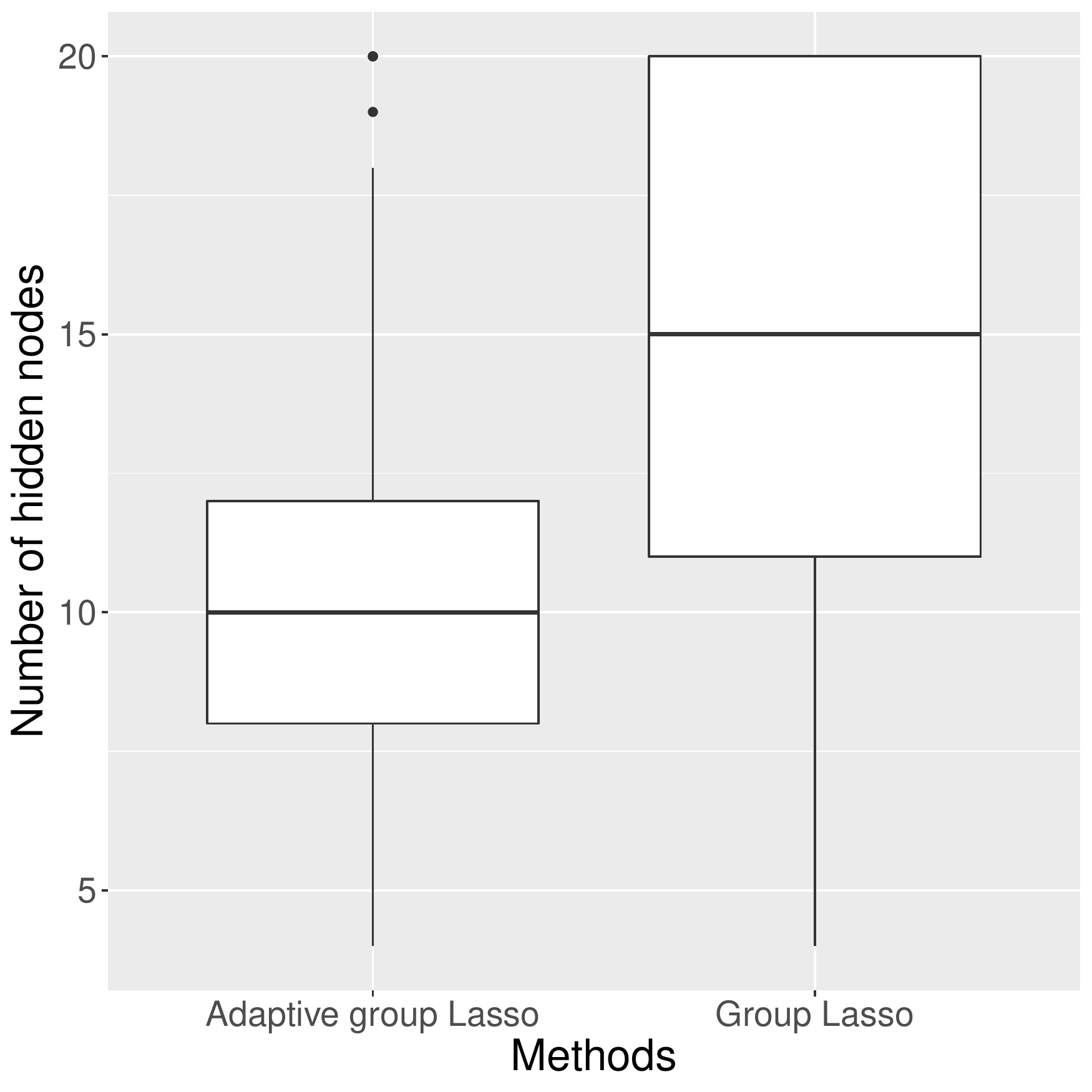}}
\caption{Number of hidden nodes selected by our proposed Adaptive Group Lasso method and the Group Lasso method proposed in \cite{murray2015auto}. The number of hidden nodes of the data-generating network is $10$.}
\label{fig:fig}
\end{center}
\vskip -0.2in
\end{figure}

In this experiment, we simulate $100$ datasets of size $n = 5000$ according to the model \eqref{eqn:model} with $H^* = 10$, $\sigma^2 = 1$.
The inputs ($5$ features) and parameters of the model are drawn independently from the standard normal distribution.
For each dataset, we train a network with $H = 20$ over $10000$ epochs using our proposed Adaptive Group Lasso method and the Group Lasso method proposed in \cite{murray2015auto, murray2019autosizing}.
The regularizing constants of both methods is chosen from the set $\{ 0.001, 0.005, 0.01, 0.025, 0.05, 0.075, 0.1 \}$ using the Akaike information criterion (AIC).
Our optimization method is Proximal gradient method \cite{parikh2014proximal} (the learning rate is $0.01$), which can identify the support of the estimates directly without the need of thresholding.
For the Adaptive group Lasso, we choose $\gamma = 2$.
The simulation is implemented in Python using Pytorch library.

We count the number of hidden nodes selected by each method.
Figure \ref{fig:fig} summarizes the results of our simulation.
The Adaptive Group Lasso performs better than the group Lasso in choosing the size of a network.
Note that the best number of hidden nodes is $10$.

\subsection{Boston housing dataset}

\begin{figure}[ht]
\vskip 0.2in
\begin{center}
\centerline{\includegraphics[width=0.5\columnwidth]{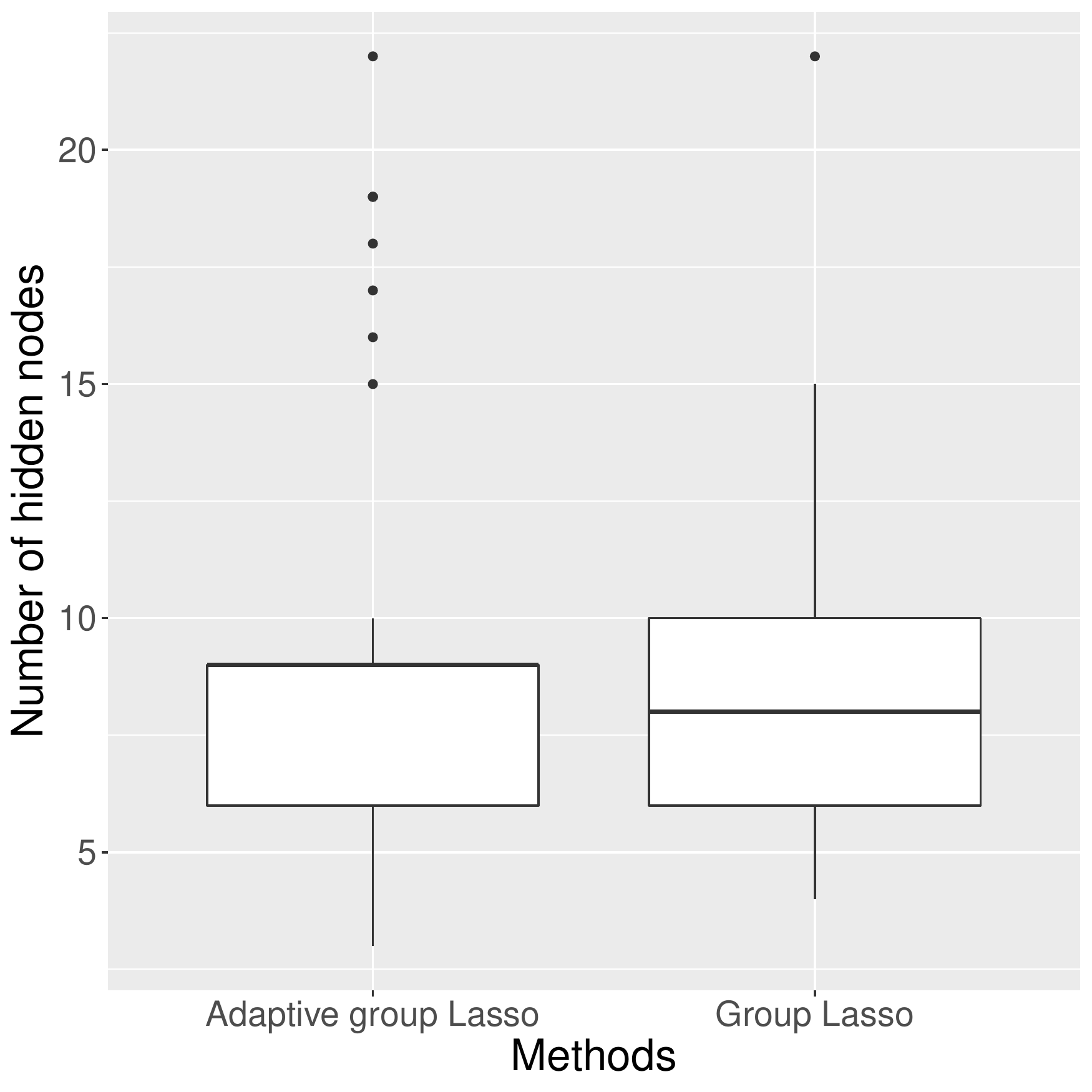}}
\caption{Number of hidden nodes selected by our proposed Adaptive Group Lasso method and the group Lasso method proposed in \cite{murray2015auto} (Boston housing dataset).}
\label{fig:fig2}
\end{center}
\vskip -0.2in
\end{figure}

\begin{figure}[ht]
\vskip 0.2in
\begin{center}
\centerline{\includegraphics[width=0.5\columnwidth]{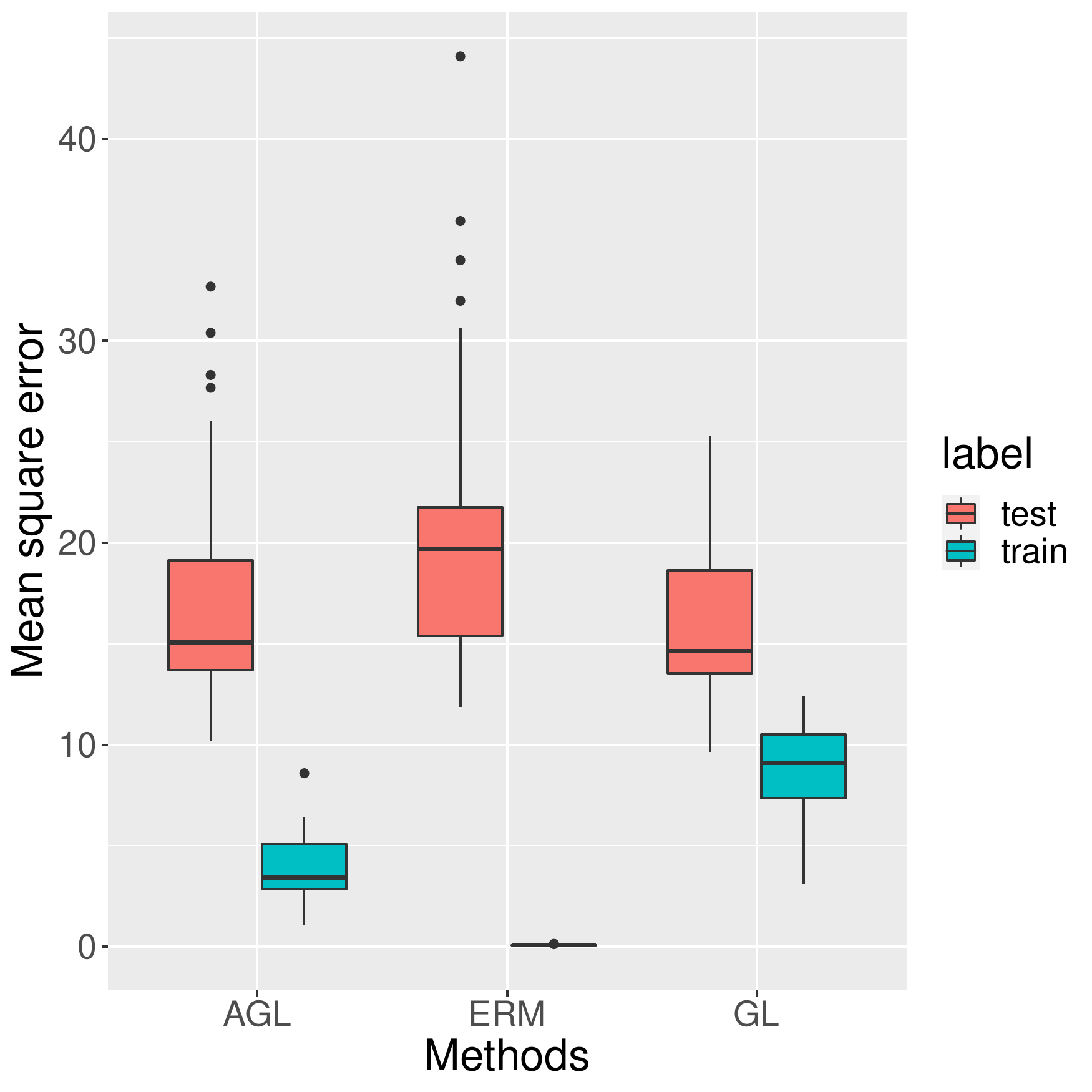}}
\caption{The training and testing errors of the Adaptive Group Lasso methods and the group Lasso method proposed in \cite{murray2015auto} (Boston housing dataset).}
\label{fig:fig3}
\end{center}
\vskip -0.2in
\end{figure}

We use our framework to study the Boston housing dataset \footnote{http://lib.stat.cmu.edu/datasets/boston}. 
This dataset consists of 506 observations of house prices and $13$ predictors. 
We consider a network with $50$ hidden nodes. 
The group Lasso and Adaptive group Lasso methods are then performed on this dataset using average test errors from 50 random train-test splits (with the size of the test sets being $25\%$ of the original dataset) over $10000$ epochs. 
The regularizing constants of the algorithms are chosen from the set $\{ 0.1, 0.3, 0.5, 0.7,1 \}$ using the Akaike information criterion (AIC).
As in the previous part, we use the proximal gradient method (with learning rate $0.01$) for optimization and choose $\gamma = 2$ for the Adaptive group Lasso. 
We also consider the simple Empirical risk minimizer (ERM) in this experiment.

The number of hidden nodes selected by group Lasso and Adaptive group Lasso are presented in Figure \ref{fig:fig2}.
Although the destructive methods choose much smaller networks (about one-fifth the size of the full networks), their prediction errors are slightly better than the ERM which uses the full network (Figure \ref{fig:fig3}).
The gap between training error and testing error of the destructive methods is also smaller compared to the ERM.

\section{Discussion and Conclusion}

We prove that our proposed Adaptive group Lasso method is model selection consistent for the problem of selecting the number of hidden nodes of one-hidden-layer feedforward networks.
To the best of our knowledge, this is the first theoretical result for the popular destructive technique.
We also obtain the consistency of the group Lasso method as a byproduct of our proof.
However, the question about the model selection consistency of the group Lasso estimator remains open.
One interesting direction for future work is extending our results to deep neural networks.
This requires further investigation on the properties of minimal deep neural networks.
Another avenue for future direction is developing theory and methods for applying constructive and destructive approaches to select the number of layers of deep neural networks.

In this paper, we assume that the true underlying function is a neural network model (Equation \ref{eqn:model}).
The extension from model-based framework to the general cases with model mismatch is an intriguing question in learning with neural networks.
In general, the projections of the true underlying function to the hypothesis space (in $\ell_2$ distance for regression) might not be unique, and they might not be similar to each other in terms of the structure of interest. Understanding of these projections for neural networks is limited, and analyses of the general cases need to involve imposing certain strong conditions on them \citep{feng2017sparse}. 
For our problem of structure reconstruction, one possible set of conditions are: (1) the set of optimal projections (in function space) is finite, and (2) all optimal projections have the same number of hidden nodes.
Our proofs can be adapted to this setting with minor adjustments.

\section*{Acknowledgement}
LSTH was supported by startup funds from Dalhousie University, the Canada Research Chairs program, the
NSERC Discovery Grant RGPIN-2018-05447, and the NSERC Discovery Launch Supplement DGECR-2018-00181.
VD was supported by a startup fund from University of Delaware and National Science Foundation grant DMS-1951474.

\bibliographystyle{chicago}
\bibliography{mybibfile}

\end{document}